\documentclass[12pt, a4paper]{article}

\usepackage[a4paper]{geometry}
\usepackage{pgfplots}
\pgfplotsset{
    x tick style={color=black},
    y tick style={color=black}
}

\usepackage{url}
\usepackage{listings}
\usepackage{amssymb}
\setcounter{tocdepth}{3}
\usepackage{graphicx}
\usepackage{algcompatible}
\usepackage{algorithm}
\usepackage{url}
\usepackage{rotating}
\usepackage{booktabs}
\usepackage{subfig}
\usepackage{amsmath}
\usepackage{bbm}

\renewcommand{\labelenumi}{(\alph{enumi})}
\renewcommand\theenumi\labelenumi

\usepackage[english]{babel}
\usepackage{lscape}

\hyphenation{parameter onemax leadingones Hoeffding develop-ment Krejca Doerr}
\usepackage[utf8]{inputenc}
\usepackage{xspace}
\usepackage{amsmath,amsthm,amssymb,mathtools}
\usepackage{lmodern}

\usepackage[algo2e,ruled,vlined,linesnumbered]{algorithm2e}

\usepackage{xcolor}
\usepackage{tikz}
\usepackage{graphicx}
\usepackage{booktabs} 
\usepackage{xspace}
\usepackage[algo2e,ruled,vlined,linesnumbered]{algorithm2e}
\usepackage{pdfpages}
\usepackage{multirow}
\usepackage{siunitx}
\sisetup{group-separator = {,}}

\usepackage{subfig}
\usepackage{algcompatible}
\usepackage{algorithm}
\usepackage{arydshln} 
\usepackage{ragged2e}

\renewcommand{\labelenumi}{\theenumi}
\renewcommand{\theenumi}{(\roman{enumi})}

\clubpenalty=10000
\widowpenalty=10000
\newtheorem{theorem}{Theorem}

\newcommand{\oea}{\mbox{$(1 + 1)$~EA}\xspace}

\newcommand{\onemax}{\textsc{OneMax}\xspace}
\newcommand{\LO}{\textsc{Leading\-Ones}\xspace}
\newcommand{\leadingones}{\LO}

\newcommand{\binval}{\textsc{BinVal}\xspace}

\newcommand{\jump}{\textsc{Jump}\xspace}
\newcommand{\DLB}{\textsc{DeceptiveLeadingBlocks}\xspace}

\DeclareMathOperator{\poly}{poly}

\newcommand{\R}{\ensuremath{\mathbb{R}}}

\newcommand{\Z}{\ensuremath{\mathbb{Z}}}

\newcommand{\calL}{\ensuremath{\mathcal{L}}}




\usepackage{hyperref}  

\begin{document}
{\sloppy
\title{From Understanding Genetic Drift to a Smart-Restart Parameter-less Compact Genetic Algorithm\thanks{Extended version of a paper~\cite{DoerrZ20gecco} that appeared in the proceedings of GECCO 2020. This version also discusses how the algorithms considered behave in the presence of additive posterior noise. Also, this version contains all mathematical proofs. Authors are given in alphabetic order. Both authors contributed equally to this work and both act as corresponding authors. }}

\author{Benjamin Doerr\\ Laboratoire d'Informatique (LIX)\\ \'Ecole Polytechnique, CNRS\\ Institut Polytechnique de Paris\\ Palaiseau, France
\and Weijie Zheng\\ Department of Computer Science and Engineering \\ Southern University of Science and Technology \\ Shenzhen, China\\ School of Computer Science and Technology\\ University of Science and Technology of China\\Hefei, China}



\maketitle
\begin{abstract}
One of the key difficulties in using estimation-of-distribution algorithms is choosing the population size(s) appropriately: Too small values lead to genetic drift, which can cause enormous difficulties. In the regime with no genetic drift, however, often the runtime is roughly proportional to the population size, which renders large population sizes inefficient. 
  
Based on a recent quantitative analysis which population sizes lead to genetic drift, we propose a parameter-less version of the compact genetic algorithm that automatically finds a suitable population size without spending too much time in situations unfavorable due to genetic drift. 

We prove a mathematical runtime guarantee for this algorithm and conduct an extensive experimental analysis on four classic benchmark problems both without and with additive centered Gaussian posterior noise. The former shows that under a natural assumption, our algorithm has a performance very similar to the one obtainable from the best problem-specific population size. The latter confirms that missing the right population size in the original cGA can be detrimental and that previous theory-based suggestions for the population size can be far away from the right values; it also shows that our algorithm as well as a previously proposed parameter-less variant of the cGA based on parallel runs avoid such pitfalls. Comparing the two parameter-less approaches, ours profits from its ability to abort runs which are likely to be stuck in a genetic drift situation.
\end{abstract}

%
%



\maketitle

\section{Introduction}

Estimation-of-distribution algorithms (EDAs)~\cite{LarranagaL02,PelikanHL15}  are a branch of evolutionary algorithms (EAs) that evolve a probabilistic model instead of a population. The update of the probabilistic model is based on the current model and the fitness of a population sampled according to the model. The size of this population is crucial for the performance of the EDA. Taking the univariate marginal distribution algorithm (UMDA)~\cite{MuhlenbeinP96} with artificial frequency margins $\{1/n, 1-1/n\}$ optimizing the $n$-dimensional \DLB problem as an example, Lehre and Nguyen~\cite[Theorem 4.9]{LehreN19foga} showed that if the population size is small ($\lambda=\Omega(\log n) \cap o(n)$) and the selective pressure is standard ($\mu/\lambda \ge 14/1000$), then the expected runtime is $\exp(\Omega(\lambda))$.
The essential reason for this weak performance, quantified in Doerr and Zheng~\cite{DoerrZ20tec} but observed also in many previous works, is that the small population size leads to strong genetic drift, that is, the random fluctuations of frequencies caused by the random sampling of search points eventually move some sampling frequencies towards a boundary of the frequency range that is not justified by the fitness. Doerr and Krejca's recent work~\cite{DoerrK20evocop} showed that when the population size is large enough, that is, $\lambda =\Omega(n\log n)$ and $\mu=\Theta(\lambda)$, the genetic drift effect is weak and with high probability, the UMDA finds the optimum in $\lambda(n/2+2e\ln n)$ fitness evaluations. This runtime bound is roughly proportional to the population size $\lambda$. Assuming that this bound describes the true runtime behavior (no lower bound was shown in~\cite{DoerrK20evocop}, but from the proofs given there this assumption appears realistic), we see that a too large population size will again reduce the efficiency of the algorithm. 

We refer to the recent survey of Krejca and Witt~\cite{KrejcaW20bookchapter} for more runtime analyses of EDAs. For most of the results presented there, a minimum population size is necessary and then the runtime is roughly proportional to the population size. In a word, for many EDAs a too small population size leads to genetic drift, while a too large size results in inefficiency. Choosing the appropriate population size is one of the key difficulties in the effective usage of EDAs. 

We note that there have been attempts to define EDAs that are not prone to genetic drift~\cite{FriedrichKK16,DoerrK20tec}, also with promising results, but from the few existing results (essentially only for the \onemax, \binval, and \leadingones benchmarks) it is hard to estimate how promising these ideas are in general, in particular, for more complex optimization problems. For this reason, in this works we rather discuss how to set the parameters for established EDAs.

Parameter tuning and parameter control have successfully been used to find suitable parameter values. 
However, both approaches will usually only design problem-specific strategies and often require sophisticated expertise to become successful. 
In order to free the practitioner from the task of choosing parameters, researchers have tried to remove the parameters from the algorithm while trying to maintain a good performance, ideally comparable to the one with best parameter choice for the problem to be solved. Such algorithms are called parameter-less\footnote{Not surprisingly, many mechanisms to remove parameters have themselves some parameters. The name \emph{parameter-less} might still be justified when these meta-parameters have a less critical influence on the performance of the algorithm.}. 

This paper will address the problem of designing a parameter-less compact genetic algorithm (cGA). In an early work, Harik and Lobo~\cite{HarikL99} proposed two strategies to remove the population size of crossover-based genetic algorithms. One basic strategy is doubling the population size and restarting when all individuals' genotypes have become identical. The drawback of this strategy is the long time it takes to reach the termination criterion once genetic drift has become detrimental. Harik and Lobo proposed a second strategy in which multiple populations with different sizes run simultaneously, smaller population sizes may use more function evaluations, but are removed once their fitness value falls behind the one of larger populations. Their experimental results showed that their genetic algorithm with this second strategy only had a small performance loss over the same genetic algorithm with optimal parameter settings. Many extensions of this strategy and applications with other optimization algorithms have followed, giving rise to the extended compact genetic algorithm~\cite{LimaL04}, the hierarchical Bayesian optimization algorithm~\cite{PelikanL04}, and many others. 

Goldman and Punch~\cite{GoldmanP14} proposed the parameter-less population pyramid, called P3, to iteratively construct a collection of populations. In P3, the population in the pyramid expands iteratively by first adding a currently not existing solution obtained by some local search strategy into the lowest population, and then utilizing some model-building methods to expand the population in all hierarchies of the pyramid. Since initially no population exists in the pyramid, this algorithm frees the practitioner from specifying a population size. For EDAs, Doerr~\cite{Doerr21cgajump} recently proposed another strategy building a parallel EDA running with exponentially growing population size. With a careful strategy to assign the computational resources, he obtained that under a suitable assumption this parallel EDA only had a logarithmic factor performance loss over the corresponding original EDA using the optimal population size. 


\textbf{Our contribution:}
The above parameter-less strategies use clever but indirect ways to handle the possibly long wasted time caused by genetic drift. In this work, we aim at a more direct approach by exploiting a recent mathematical analysis which predicts when genetic drift arises. Doerr and Zheng~\cite{DoerrZ20tec} have theoretically analyzed the boundary hitting time caused by genetic drift. In very simple words, their result indicates that genetic drift in a bit position of the compact genetic algorithm (cGA) occurs when the runtime of the algorithm exceeds $4\mu^2$, where $\mu$ is the hypothetical population size of the cGA. We use this insight to design the following parameter-less version of the cGA. Our parameter-less cGA, called \emph{smart-restart cGA},\footnote{The authors are thankful to an anonymous reviewer of~\cite{DoerrZ20gecco} for suggesting this name.} is a simple restart process with exponentially growing population size. It stops a run once the risk of genetic drift is deemed too high, based on the analysis in~\cite{DoerrZ20tec}. 

Since Doerr and Zheng~\cite{DoerrZ20tec} proved that a neutral frequency reaches the boundaries of the frequency range in an expected number of $4\mu^2$ generations, it is natural to set $B=b\mu^2$ with $b=O(1)$ as the generation budget for a run with population size~$\mu$. This builds on the observation that Markov's inequality implies that with probability at least $1-4/b$, a boundary is reached in $b\mu^2$ generations. Since genetic drift affects neutral bits stronger than those subject to a clear fitness signal, we can pessimistically take $b\mu^2$ generations as the termination budget for a cGA run with population size $\mu$. 

Note that we do not restrict $b$ to be a constant. The reasoning above stems from considering a single frequency only. Since there are $n$ frequencies, one may speculate that the first of these reaches a boundary already in $\Theta(\mu^2/\ln n)$ generations. We do not have a fully rigorous analysis showing that the first of the frequencies reaches a boundary in $O(\mu^2/\ln n)$ iterations, but the tail bound in~\cite{DoerrZ20tec} shows that this does not happen earlier and our experiments suggest that taking this smaller budget is indeed often profitable. Hence, it makes sense to allow $b=o(1)$ and we shall in particular regard the setting $b = \Theta(1/\ln n)$.

When the generation budget $B=b\mu^2$ runs out before the optimum is found, we restart the cGA with population size $\mu:=U\mu$ for the update factor $U>1$.

For our algorithm, we prove a mathematical runtime guarantee. We assume that there are numbers $\tilde \mu$ and $T$ such that the cGA with all population size $\mu \ge \tilde{\mu}$ solves the given problem in time $\mu T$ with probability $p > 1 - \frac 1 {U^2}$. Such a runtime behavior is indeed often observed, see, e.g.,~\cite{KrejcaW20bookchapter}.
%
We theoretically prove that under this assumption, our smart-restart cGA with population size update factor $U$ and generation budget factor $b$ solves the problem in expected time $\left(\frac{U^2}{U^2-1}+\frac{(1-p)U^2}{1-(1-p)U^2}\right)\max\left\{b\tilde{\mu}^2,\frac{T^2}{b}\right\}+\frac{pU}{1-(1-p)U}\tilde{\mu}T$, which is $O(\max\{b \tilde \mu^2, \frac{T^2}{b}, \tilde \mu T\})$ when treating $U$ and $p$ as constants.

Together with the known results that the cGA with all $\mu = \Omega(\sqrt n \log n) \cap O(\poly(n))$ optimizes \onemax and \jump functions with jump size $k\le \frac{1}{20} \ln n -1$ in time $O(\sqrt n \mu)$ with probability $1-o(1)$~\cite{SudholtW19,Doerr21cgajump}, our general runtime result implies that our algorithm with generation budget factor $b=\Theta(1/\ln n)$ optimizes these functions (except for a rare event of probability at most $n^{-\omega(1)}$) in expected time $O(n \log n)$, which is the asymptotically best performance the cGA can have with an optimal choice of $\mu$. In a similar manner, we show that the smart-restart cGA optimizes \onemax in the presence of posterior noise essentially as fast as shown for the original cGA with optimal population size~\cite{FriedrichKKS17}.

We also conduct an extensive experimental analysis of the original cGA, the parallel-run parameter-less cGA and our smart-restart parameter-less cGA on the \onemax, \LO, \jump, and \DLB functions both in the absence of noise and with additive centered Gaussian posterior noise as considered in~\cite{FriedrichKKS17}. For the original cGA and the noiseless scenario, it confirms, for the first time experimentally for these benchmarks, that small population sizes can be detrimental and that from a certain population size on, a roughly linear increase of the runtime can be observed. It also confirms experimentally the insight of the (asymptotic) theory result that, with the right population size, the cGA can be very efficient on \jump functions. For example, we measure a median runtime of $4 \cdot 10^6$ on the \jump function with $n=50$ and $k=10$, parameters for which, e.g., the classic \oea would take more than $10^{17}$ iterations. In the noisy settings, we observe that the population sizes suggested (for \onemax) by the theoretical analysis~\cite{FriedrichKKS17} are much higher (roughly by a factor of $1{,}000$) than what is really necessary, leading to runtime increases of similar orders of magnitudes.

The two parameter-less versions of the cGA generally perform very well, leading to runtimes that are only mildly above those stemming from the best problem-specific parameter values. Overall, the smart-restart cGA is faster, which suggests that the design concept of saving time by aborting unprofitable runs has worked out. 

The remainder of this paper is structured as follows. Section~\ref{sec:pre} introduces the preliminaries including a detailed description of the compact genetic algorithm, the parallel-run cGA, and the additive centered Gaussian noise environment. The newly-proposed smart-restart cGA will be stated in Section~\ref{sec:parameterlesscGA}. Sections~\ref{sec:theory} and~\ref{sec:exper} show our theoretical results and experimental analyses respectively. Section~\ref{sec:con} concludes our paper.

\section{Preliminaries}
\label{sec:pre}

In this paper, we consider algorithms maximizing pseudo-boolean functions $f:\{0,1\}^n \rightarrow \R$. Since our smart-restart cGA is based on the original cGA of Harik, Lobo, and Goldberg~\cite{HarikLG99} and since we will compare our algorithm with the parallel-run cGA~\cite{Doerr21cgajump}, this section will give a brief introduction to these algorithms.

\subsection{The Compact Genetic Algorithm}\label{subsec:cga}
The compact genetic algorithm (cGA) with hypothetical population size $\mu$ samples two individuals in each generation and moves the sampling frequencies by an absolute value of $1/\mu$ towards the bit values of the better individual. Usually, in order to avoid frequencies reaching the absorbing boundaries $0$ or $1$, the artificial margins $1/n$ and $1-1/n$ are utilized, that is, we restrict the frequency values to be in the interval $[1/n,1-1/n]$. The following Algorithm~\ref{alg:cGA} shows the details. As common in runtime analysis, we do not specify a termination criterion. When talking about the runtime of an algorithm, we mean the first time (measured by the number of fitness evaluations) an optimum was sampled.
\begin{algorithm}[!ht]
\caption{The cGA to maximize a function $f: \{0,1\}^n \rightarrow \R$ with hypothetical population size $\mu$}
{\small
 \begin{algorithmic}[1]
 \STATE{$p^0=(\tfrac{1}{2}, \tfrac{1}{2},\dots,\tfrac{1}{2})\in [0,1]^n$}
 \FOR {$g=1,2,\dots$}
 \STATEx {$\quad\%\%$\textsl{Sample two individuals $X_1^g,X_2^g$}}
 \FOR {$i=1,2$}
 \FOR {$j=1,2,\dots,n$}
 \STATE $X_{i,j}^g \leftarrow 1$ with probability $p_{j}^{g-1}$ and $X_{i,j}^g \leftarrow 0$ with probability $1-p_{j}^{g-1}$.
 \ENDFOR
 \ENDFOR
 \STATEx {$\quad\%\%$\textsl{Update of the frequency vector}}
 \IF{$f(X_1^g) \ge f(X_2^g)$}
 \STATE {$p'=p^{g-1}+\tfrac{1}{\mu}(X_1^g-X_2^g)$};
 \ELSE 
  \STATE {$p'=p^{g-1}+\tfrac{1}{\mu}(X_2^g-X_1^g)$};
  \ENDIF
 \STATE {$p^g=\min \{\max\{\tfrac{1}{n},p'\},1-\tfrac{1}{n}\}$};
 \ENDFOR
 \end{algorithmic}
 \label{alg:cGA}
}
\end{algorithm}

\subsection{The Parallel-run cGA}
\label{subsec:prllcGA}
The parallel EDA framework was proposed by Doerr~\cite{Doerr21cgajump} as a side result when discussing the connection between runtime bounds that hold with high probability and the expected runtime. For the cGA, this framework yields the following \emph{parallel-run cGA}. In the initial round $\ell = 1$, we start process $\ell=1$ to run the cGA with population size $\mu=2^{\ell -1}$ for $1$ generation. In round $\ell = 2, 3, \dots$, all running processes $j=1,\dots, \ell-1$ run $2^{\ell-1}$ generations and then we start process $\ell$ to run the cGA with population size $\mu=2^{\ell-1}$ for $\sum_{i=0}^{\ell-1} 2^i$ generations. The algorithm terminates once any process has found the optimum. 
Algorithm~\ref{alg:prllcGA} shows the details of the parallel-run cGA.

Based on the following assumption, Doerr~\cite{Doerr21cgajump} proved that the expected runtime for this parallel-run cGA is at most $6 \tilde \mu T (\log_2(\tilde \mu T) + 3)$.

\textbf{Assumption~\cite{Doerr21cgajump}:} Consider using the cGA with population size $\mu$ to maximize a given function $f$. Assume that there are unknown $\tilde{\mu}$ and $T$ such that the cGA for all population sizes $\mu \ge \tilde{\mu}$ optimizes this function $f$ in $\mu T$ fitness evaluations with probability at least $\tfrac 34$.\\

\begin{algorithm}[!ht]
\caption{The parallel-run cGA to maximize a function $f: \{0,1\}^n \rightarrow \R$}
{\small
 \begin{algorithmic}[1]
 \STATE Process 1 runs cGA (Algorithm~\ref{alg:cGA}) with population size $\mu=1$ for $1$ generation.
 \FOR {round $\ell=2,\dots$}
 \STATE Processes $1,\dots, \ell-1$ continue to run for another $2^{\ell-1}$ generations, one process after the other one.
 \STATE Start process $\ell$ to run cGA (Algorithm~\ref{alg:cGA}) with population size $\mu=2^{\ell-1}$ and run it for $\sum_{i=0}^{\ell-1}2^i$ generations. 
 \ENDFOR
 \end{algorithmic}
 \label{alg:prllcGA}
}
\end{algorithm}


\subsection{Additive Centered Gaussian Posterior Noise}\label{ssec:intronoise}

In practical applications, one often encounters various forms of uncertainty. One of these is a noisy access to the objective function. Friedrich, K\"otzing, Krejca, and Sutton~\cite{FriedrichKKS17} analyzed how the cGA optimizes the \onemax problem under additive centered Gaussian posterior noise. They proved that for all noise intensities (variances $\sigma^2$ of the Gaussian distribution), there is a population size $\mu = \mu(\sigma^2)$ which depends only polynomially on $\sigma^2$ (that is, $\mu(\sigma^2)$ is a polynomial in $\sigma^2$) so that the cGA with this population size efficiently solves the \onemax problem. This was called \emph{graceful scaling}. They also provided a restart scheme that obtains this performance without knowledge of the noise intensity (however, it requires to know the polynomial $\mu(\sigma^2)$). Hence these results show that the cGA can deal well with the type of noise regarded, and much better than many classic evolutionary algorithms (see the lower bounds in~\cite{GiessenK16,FriedrichKKS17}), but this still needs an action by the algorithm user, namely an appropriate choice of the population size $\mu$.

As we shall show in this work, our restart scheme is also able to optimize noisy versions of \onemax and many other problems, but without knowing the polynomial $\mu(\sigma^2)$ and using significantly more efficient values for the population size. For \onemax, we prove rigorously that we obtain essentially the performance of the original cGA with best choice of the population size (Theorem~\ref{thm:noisyom}), where we profit from the fact that the runtime analysis of~\cite{FriedrichKKS17} shows that the cGA also for noisy \onemax functions essentially satisfies our main assumption that from a certain population size on, the runtime of the cGA is at most proportional to the population size. 

We conduct experiments for various benchmark functions in this noise model. They indicate that also for problems different from \onemax, the graceful scaling property holds. However, they also show that much smaller population sizes suffice to cope with the noise. Consequently, our smart-restart cGA (as well as the parallel-run cGA from~\cite{Doerr21cgajump}) optimizes \onemax much faster than the algorithms proposed in~\cite{FriedrichKKS17}. This is natural since the parameter-less approaches also try smaller (more efficient in case of success) population sizes, whereas the approaches in~\cite{FriedrichKKS17} use a population size large enough that one can prove via mathematical means that they will be successful with high probability. 

We now make precise the additive centered Gaussian noise model. We take the common assumption that whenever the noisy fitness of a search point is regarded in a run of the algorithm, its noisy fitness is computed anew, that is, with newly sampled noise. This avoids that a single exceptional noise event misguides the algorithm for the remaining run. A comparison of the results in~\cite{SudholtT12} (without independent reevaluations) and~\cite{DoerrHK12ants} (with reevaluations) shows how detrimental sticking to previous evaluations can be. We regard posterior noise, that is, the noisy fitness value is obtained from a perturbation of the original fitness value (independent of the argument) as opposed to anterior noise, where the algorithm works with the fitness of a perturbed search point. We regard additive perturbations, hence the perceived fitness of a search point $x$ is $f(x)+D$, where $f$ is the original fitness function and $D$ is an independent sample from a distribution describing the noise. Since we consider centered Gaussian noise, we always have $D \sim \mathcal{N}(0,\sigma^2)$, where $\mathcal{N}(0,\sigma^2)$ denotes the Gaussian distribution with expectation zero and variance $\sigma^2 \ge 0$. Obviously, the classic noise-free optimization scenario is subsumed by the special case $\sigma^2=0$.

\section{The Smart-Restart {cGA}}
\label{sec:parameterlesscGA}

In this section, we introduce our parameter-less cGA, called \emph{smart-restart cGA}. In contrast to the parallel-run cGA it does not run processes in parallel, which is an advantage from the implementation point of view. The main advantage we aim for is that by predicting when runs become hopeless, we can abort these runs and save runtime.

To detect such a hopeless situation, we use the first tight quantification of the genetic drift effect of the EDAs by Doerr and Zheng~\cite{DoerrZ20tec}. Detailedly, they proved that in a run of the cGA with hypothetical population size $\mu$ a frequency of a neutral bit will reach the boundaries of the frequency range in expected number of at most $4\mu^2$ generations, which is asymptotically tight. By Markov's inequality the probability that a boundary is reached in $b\mu^2$, $b>4$, generations is at least $1-4/b$. 

This suggests the restart scheme described in Algorithm~\ref{alg:nonpcGA}. We start with a small population size of $\mu = 2$. We then repeat running the cGA with population size $\mu_{\ell}=2U^{\ell-1}$ for $B_{\ell}=b\mu_{\ell}^2$ generations. We call $U$ the \emph{update factor} for the population size and $b$ the \emph{generation budget factor}.
 As before, we do not specify a termination criterion since for our analysis we just count the number of fitness evaluations until a desired solution is found.

\begin{algorithm}[!ht]
\caption{The smart-restart cGA to maximize a function $f: \{0,1\}^n \rightarrow \R$ with update factor $U$ and generation budget factor $b$. }
{\small
 \begin{algorithmic}[1]
 \FOR {round $\ell=1,2,\dots$}
 \STATE Run the cGA (Algorithm~\ref{alg:cGA}) with population size $\mu_{\ell}=2U^{\ell-1}$ for $B_{\ell}=b\mu_{\ell}^2$ iterations.
 \ENDFOR
 \end{algorithmic}
 \label{alg:nonpcGA}
}
\end{algorithm}

\section{Theoretical Analysis}
\label{sec:theory}

In this section, we prove mathematical runtime guarantees for our smart-restart~cGA.

\subsection{A General Performance Guarantee}

We follow the general approach of~\cite{Doerr21cgajump} of assuming that the runtime increases linearly with the population size from a given minimum size $\tilde \mu$ on.

\textbf{Assumption (L):} Let $p \in (0,1)$. Consider using the cGA with population size~$\mu$ to maximize a given function $f$. Assume that there are unknown $\tilde{\mu}$ and $T$ such that the cGA for all population sizes $\mu \ge \tilde{\mu}$ optimizes $f$ in $\mu T$ fitness evaluations with probability at least~$p$.

This Assumption~(L) is identical to the assumption taken in~\cite{Doerr21cgajump} except that there $p$ was required to be at least $3/4$, whereas we allow a general positive~$p$. Since most existing runtime analyses give bounds with success probability $1-o(1)$, this difference is, of course, not very important. We note that the proof of the result in~\cite{Doerr21cgajump} requires $p$ to be at least $3/4$, but we also note that an elementary probability amplification argument (via independent restarts) allows to increase a given success probability, rendering the result of~\cite{Doerr21cgajump} again applicable to arbitrary positive $p$.

Under this Assumption~(L), we obtain the following result. We note that it is non-asymptotic, which later allows to easily obtain asymptotic results also for non-constant parameters. We note that when assuming $p$ and $U$ to be constants (which is very natural), then the bound becomes $O(\max\{b\tilde{\mu}^2,{T^2}/{b},\tilde{\mu}T\})$.

\begin{theorem}
Let  $U >1$ and $b > 0$. Consider using the smart-restart cGA with update factor $U$ and generation budget $B_{\ell}=b\mu_{\ell}^2, \ell=1,2,\dots$, optimizing a function $f$ satisfying Assumption~(L) with $p \in (1-\frac{1}{U^2},1)$. Then the expected time until the optimum of $f$ is generated is at most 
\[
\left(\frac{U^2}{U^2-1}+\frac{(1-p)U^2}{1-(1-p)U^2}\right)\max\left\{b\tilde{\mu}^2,\frac{T^2}{b}\right\}+\frac{pU}{1-(1-p)U}\tilde{\mu}T
\]
fitness evaluations.
\label{thm:nonpcGAwAssume}
\end{theorem}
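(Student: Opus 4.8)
The plan is to pin down a \emph{critical round} from which on a run is guaranteed to succeed within its own budget, and then to bound the expected runtime as a sum of three geometric series over the rounds. I would set $\mu^{*}=\max\{\tilde{\mu},T/b\}$ and let $\ell_{0}$ be the first round with $\mu_{\ell_{0}}\ge\mu^{*}$. For every $\ell\ge\ell_{0}$ two things hold at once: since $\mu_{\ell}\ge\tilde{\mu}$, Assumption~(L) applies; and since $\mu_{\ell}\ge T/b$, the budget is large enough, $B_{\ell}=b\mu_{\ell}^{2}\ge\mu_{\ell}T$, so that a run meeting the $\mu_{\ell}T$ guarantee of Assumption~(L) actually finishes before being aborted. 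Hence each round $\ell\ge\ell_{0}$ finds the optimum within its budget with probability at least $p$. Because $\mu_{\ell}=2U^{\ell-1}$ grows by the factor $U$ per round, the ``first past the threshold'' property gives $\mu_{\ell_{0}-1}<\mu^{*}\le\mu_{\ell_{0}}<U\mu^{*}$, hence $B_{\ell_{0}}<U^{2}b(\mu^{*})^{2}$; and I would repeatedly use the elementary inequality $\mu^{*}T\le\max\{b\tilde{\mu}^{2},T^{2}/b\}$, which is checked separately in the regimes $\tilde{\mu}\ge T/b$ and $\tilde{\mu}<T/b$.

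Next I would exploit that the cGA is restarted from scratch each round, so the runs in distinct rounds are independent. Thus the rounds $\ell\ge\ell_{0}$ form independent trials each succeeding with probability at least $p$, and the probability of still running at the start of round $\ell_{0}+j$ (i.e.\ that rounds $\ell_{0},\dots,\ell_{0}+j-1$ all failed) is at most $(1-p)^{j}$. Since the runtime is monotone nondecreasing when any round success is flipped to a failure (flipping the first success merely charges a full budget and continues), one may pessimistically take the per-round success probability to be exactly $p$; this is what produces the factor $p$ in the last term of the claimed bound.

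I would then bound $E[\text{runtime}]$ by three contributions, each a geometric series in the round index, all normalised against $B_{\ell_{0}}$ and $\mu_{\ell_{0}}$. (A)~The rounds before $\ell_{0}$, charged their full budgets: using $B_{\ell}=B_{\ell_{0}}U^{-2(\ell_{0}-\ell)}$ their sum is below $B_{\ell_{0}}/(U^{2}-1)<\frac{U^{2}}{U^{2}-1}\max\{b\tilde{\mu}^{2},T^{2}/b\}$. (B)~The failed rounds from $\ell_{0}$ on: reaching round $\ell_{0}+j$ has probability at most $(1-p)^{j}$ and costs at most $B_{\ell_{0}}U^{2j}$, so $\sum_{j\ge0}(1-p)^{j+1}U^{2j}=\frac{1-p}{1-(1-p)U^{2}}$ yields at most $\frac{(1-p)U^{2}}{1-(1-p)U^{2}}\max\{b\tilde{\mu}^{2},T^{2}/b\}$, the series converging \emph{precisely} because the hypothesis $p>1-1/U^{2}$ makes $(1-p)U^{2}<1$. (C)~The single successful round, whose time is at most $\mu_{\ell_{0}+j}T=\mu_{\ell_{0}}U^{j}T$ with probability $p(1-p)^{j}$: here $\sum_{j\ge0}p(1-p)^{j}U^{j}=\frac{p}{1-(1-p)U}$ (and $(1-p)U<1/U<1$ again follows from $p>1-1/U^{2}$), giving at most $\frac{pU}{1-(1-p)U}\mu^{*}T$. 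Adding the three pieces and invoking $\mu^{*}T\le\max\{b\tilde{\mu}^{2},T^{2}/b\}$ recovers the stated expression.

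The conceptual skeleton is short; the real work is the bookkeeping that turns the three series into \emph{exactly} the claimed coefficients, and I expect two points to be delicate. First, the convergence of (B) and (C) rests entirely on the standing hypothesis $p\in(1-\tfrac1{U^{2}},1)$: without it the expected number of wasted doublings diverges, which is exactly why this condition appears in the statement. Second, and harder, is the last term: the successful round naturally carries $\mu^{*}T=\max\{\tilde{\mu}T,T^{2}/b\}$ rather than the cleaner $\tilde{\mu}T$ printed in the theorem, so making the passage to $\tilde{\mu}T$ rigorous requires splitting off the $T^{2}/b$ part (which is present only when $\tilde{\mu}<T/b$) and charging it against the $\max\{b\tilde{\mu}^{2},T^{2}/b\}$ terms, all while keeping the overshoot factors $U$ and $U^{2}$ from $\mu_{\ell_{0}}<U\mu^{*}$ from inflating the coefficients beyond those stated.
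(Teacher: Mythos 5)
Your decomposition is exactly the paper's own proof. The paper sets $\ell' = \min\{\ell \mid 2U^{\ell-1}\ge\tilde{\mu},\ B_{\ell}\ge 2U^{\ell-1}T\}$, which coincides with your $\ell_{0}$ (since $B_{\ell}\ge\mu_{\ell}T$ is equivalent to $\mu_{\ell}\ge T/b$), charges full budgets to the rounds before $\ell'$ and to the failed rounds, charges $\mu_{\ell}T$ to the successful round, and evaluates the same three geometric series using $(1-p)U<1$ and $(1-p)U^{2}<1$; your pieces (A), (B), (C) correspond term-for-term to the paper's $\frac{4b(U^{2\ell'-2}-1)}{U^{2}-1}$, $\frac{4b(1-p)U^{2\ell'-2}}{1-(1-p)U^{2}}$ and $\frac{2pU^{\ell'-1}T}{1-(1-p)U}$. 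Up to the intermediate bound
\[
\left(\frac{U^{2}}{U^{2}-1}+\frac{(1-p)U^{2}}{1-(1-p)U^{2}}\right)\max\left\{b\tilde{\mu}^{2},\frac{T^{2}}{b}\right\}+\frac{pU}{1-(1-p)U}\,\mu^{*}T,
\qquad \mu^{*}=\max\{\tilde{\mu},T/b\},
\]
your argument is correct and is the paper's argument.

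The step you flag as delicate and do not carry out---replacing $\mu^{*}T$ by $\tilde{\mu}T$ in (C)---is in fact not carry-out-able, and the charging you hope for is unavailable. When $\tilde{\mu}<T/b$, the successful round can have $\mu_{\ell_{0}}$ nearly as large as $UT/b$, and a process consistent with Assumption~(L) (every run with $\mu\ge\tilde{\mu}$ succeeds with probability exactly $p$, exactly at time $\mu T$, and otherwise never) makes your three estimates essentially tight simultaneously; its expected runtime then exceeds the theorem's stated bound by roughly $\frac{pU}{1-(1-p)U}\cdot\frac{T^{2}}{b}$, so the stated constants cannot be derived from Assumption~(L) alone in that regime---there is no slack in (A) and (B) to absorb the excess. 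You should know that the paper's own proof slips at precisely this point: its final inequality replaces $\frac{2pU^{\ell'-1}T}{1-(1-p)U}$ by $\frac{pU\tilde{\mu}T}{1-(1-p)U}$, i.e.\ uses $2U^{\ell'-1}\le U\tilde{\mu}$, which the definition of $\ell'$ guarantees only when $\tilde{\mu}\ge T/b$. What your argument honestly establishes (and what the paper's computation actually supports) is the theorem with last term $\frac{pU}{1-(1-p)U}\max\{\tilde{\mu}T,\frac{T^{2}}{b}\}$, or, after applying your inequality $\mu^{*}T\le\max\{b\tilde{\mu}^{2},\frac{T^{2}}{b}\}$, the bound $\left(\frac{U^{2}}{U^{2}-1}+\frac{(1-p)U^{2}}{1-(1-p)U^{2}}+\frac{pU}{1-(1-p)U}\right)\max\{b\tilde{\mu}^{2},\frac{T^{2}}{b}\}$. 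This weaker form is what should be stated; it leaves all asymptotic consequences in the paper ($O(\max\{b\tilde{\mu}^{2},T^{2}/b,\tilde{\mu}T\})$ for constant $U$ and $p$, and the \onemax, \jump and noisy-\onemax corollaries) completely unchanged.
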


\begin{proof}
Let $\ell' = \min\{\ell \mid 2U^{\ell-1} \ge \tilde{\mu}, B_{\ell} \ge 2U^{\ell-1} T\} $. Then it is not difficult to see that $2U^{\ell'-1} \le U\max\{\tilde{\mu},T/b\}$ and that for any $\ell \ge \ell'$, the population size $\mu_{\ell} := 2U^{\ell-1}$ satisfies $\mu_{\ell} \ge \tilde{\mu}$ and $B_{\ell} \ge \mu_{\ell} T$. Hence, according to the assumption, we know the cGA with such a $\mu_{\ell}$ optimizes $f$ with probability at least $p$ in time~$\mu_{\ell} T$. Now the expected time when the smart-restart cGA finds the optimum of $f$ is at most
\begin{align*}
\sum_{i=1}^{\ell'-1}B_i&{}+p\cdot 2U^{\ell'-1}T + \sum_{i=1}^{\infty}(1-p)^ip\left(\sum_{j=0}^{i-1} B_{\ell'+j} + 2U^{\ell'+i-1}T\right)\\
\le &{} \sum_{i=1}^{\ell'-1}B_i+p\cdot 2U^{\ell'-1}T + \sum_{i=1}^{\infty}(1-p)^ip\sum_{j=0}^{i-1} B_{\ell'+j} + 2pU^{\ell'-1}T\sum_{i=1}^{\infty}(1-p)^iU^{i}\\
= &{} \sum_{i=1}^{\ell'-1}B_i+2pU^{\ell'-1}T + \sum_{j=0}^{\infty}B_{\ell'+j}p \sum_{i=j+1}^{\infty} (1-p)^i + 2pU^{\ell'-1}T\frac{(1-p)U}{1-(1-p)U}\\
= &{} \sum_{i=1}^{\ell'-1}B_i+\sum_{j=0}^{\infty}B_{\ell'+j}p \frac{(1-p)^{j+1}}{1-(1-p)}+\frac{2pU^{\ell'-1}T}{1-(1-p)U}\\
= &{} \sum_{i=1}^{\ell'-1}B_i+\sum_{j=0}^{\infty}(1-p)^{j+1} B_{\ell'+j}+\frac{2pU^{\ell'-1}T}{1-(1-p)U},
\end{align*}
where the first equality uses $(1-p)U \in (0,1)$ from $p \in (1-\frac{1}{U^2},1)$. With $B_{\ell}=b\mu_{\ell}^2=b(2U^{\ell-1})^2=4bU^{2\ell-2}$, we further compute
\begin{align*}
\sum_{i=1}^{\ell'-1}B_i{}&{}+\sum_{j=0}^{\infty}(1-p)^{j+1} B_{\ell'+j}+\frac{2pU^{\ell'-1}T}{1-(1-p)U}\\
= {}&{} \sum_{i=1}^{\ell'-1}4bU^{2i-2}+\sum_{j=0}^{\infty}(1-p)^{j+1} 4bU^{2\ell'+2j-2}+\frac{2pU^{\ell'-1}T}{1-(1-p)U}\\
= {}&{} \frac{4b(U^{2\ell'-2}-1)}{U^2-1} + \frac{4b(1-p)U^{2\ell'-2}}{1-(1-p)U^2}+\frac{2pU^{\ell'-1}T}{1-(1-p)U}\\
\le {}&{} \frac{bU^2\max\{\tilde{\mu}^2,T^2/b^2\}}{U^2-1} + \frac{b(1-p)U^2\max\{\tilde{\mu}^2,T^2/b^2\}}{1-(1-p)U^2}+\frac{pU\tilde{\mu}T}{1-(1-p)U}\\
= {}&{} \left(\frac{U^2}{U^2-1}+\frac{(1-p)U^2}{1-(1-p)U^2}\right)\max\left\{b\tilde{\mu}^2,\frac{T^2}{b}\right\}+\frac{pU}{1-(1-p)U}\tilde{\mu}T,
\end{align*}
where the second equality uses $(1-p)U^2 \in (0,1)$ from $p \in (1-\frac{1}{U^2},1)$ and the first inequality uses $2U^{\ell'-1} \le U\max\{\tilde{\mu}^2,T^2/b^2\}$.
\end{proof}

We recall that the complexity of the parallel-run cGA~\cite{Doerr21cgajump}, which is under the original assumption~\cite{Doerr21cgajump} but obviously also hold for our Assumption~(L). We formulate it in the following theorem.
\begin{theorem}\cite[Theorem~2]{Doerr21cgajump}
The expected number of fitness evaluations for the parallel-run cGA optimizing a function $f$ satisfying Assumption~(L) with $p \ge 3/4$ is ${O\left(\tilde{\mu}T\log(\tilde{\mu}T)\right)}$.
\label{thm:parallel}
\end{theorem}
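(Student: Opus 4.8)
The plan is to exploit the regular structure of the parallel-run schedule together with the independence of the individual cGA runs, and then close with a geometric-series argument whose $\log$-factor precisely produces the $\log(\tilde{\mu}T)$ in the bound. First I would establish a clean scheduling invariant: one checks by induction on the round index that after round $L$, \emph{every} started process $j\in\{1,\dots,L\}$ has performed exactly $2^L-1$ generations. Indeed, process $j$ contributes $2^j-1$ generations in its starting round $j$ and $2^{\ell-1}$ generations in each later round $\ell$, and $(2^j-1)+\sum_{\ell=j+1}^{L}2^{\ell-1}=2^L-1$. Summing over the $L$ active processes, the total number of generations spent up to the end of round $L$ is $L(2^L-1)$, hence at most $2L(2^L-1)=\Theta(L\,2^L)$ fitness evaluations, since each generation costs two samples.

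Next I would single out the useful processes. Let $\ell^\ast$ be the smallest round index with $\mu_{\ell^\ast}=2^{\ell^\ast-1}\ge\tilde{\mu}$, so that $2^{\ell^\ast}=\Theta(\tilde{\mu})$, and consider all processes $\ell\ge\ell^\ast$. By Assumption~(L), each such process, \emph{once it has run $\mu_\ell T$ fitness evaluations}, has found the optimum with probability at least $p\ge 3/4$, and crucially these events are independent because distinct processes use independent randomness. By the invariant, process $\ell$ has run $2(2^L-1)$ fitness evaluations after round $L$, so it crosses its success threshold $\mu_\ell T=2^{\ell-1}T$ by some round $R_\ell$ with $2^{R_\ell}=\Theta(2^\ell T)$ and $R_\ell=\Theta(\ell+\log_2 T)=\Theta(\log_2(2^\ell T))$ (assuming $T\ge 1$, as for any meaningful runtime coefficient). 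Combining this with the cost bound, the total cost up to the end of round $R_\ell$ is $C_\ell:=\Theta(2^{R_\ell}R_\ell)=\Theta\big(2^\ell T\,(\ell+\log_2 T)\big)$. Since $R_\ell$ is nondecreasing in $\ell$, by round $R_\ell$ all processes $\ell^\ast,\dots,\ell$ have already reached their thresholds; hence the event that the optimum has not been found by the end of round $R_{\ell-1}$ implies that each of processes $\ell^\ast,\dots,\ell-1$ failed within its threshold, giving probability at most $(1-p)^{\ell-\ell^\ast}$ by independence.

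Finally I would assemble the expectation. Bounding the runtime by the cost accumulated up to the round in which the optimum is first sampled, and classifying by the first level $\ell$ at which a useful process succeeds, one obtains
\[
E[\text{runtime}] \;\le\; \sum_{\ell\ge\ell^\ast}(1-p)^{\ell-\ell^\ast}\,C_\ell \;=\; \Theta(T)\,2^{\ell^\ast}\sum_{k\ge 0}\big(2(1-p)\big)^{k}\,(\ell^\ast+k+\log_2 T).
\]
Because $p\ge 3/4$ gives $2(1-p)\le 1/2<1$, this geometric-with-linear-weight sum converges to $\Theta(\ell^\ast+\log_2 T)$, whence $E[\text{runtime}]=\Theta\big(\tilde{\mu}T\,(\log_2\tilde{\mu}+\log_2 T)\big)=O(\tilde{\mu}T\log(\tilde{\mu}T))$ using $2^{\ell^\ast}=\Theta(\tilde{\mu})$.

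The main obstacle is the bookkeeping that ties the three quantities together correctly: the round $R_\ell$ at which process $\ell$ reaches its threshold, the total cost $C_\ell$ incurred by that round, and the failure probability of everything started so far. The delicate point is that the $\log$-factor in the final bound does not come from the success probability at all, but from the number $R_\ell=\Theta(\log(2^\ell T))$ of processes running simultaneously, i.e.\ from the per-round cost factor $L$. One must therefore ensure the geometric decay $(2(1-p))^k$ beats the growth $2^k$ of the per-round cost --- which is exactly where the hypothesis $p\ge 3/4$ (in fact any $p>1/2$) is used --- while the surviving linear-in-$k$ weight merely feeds into the $\Theta(\ell^\ast+\log_2 T)$ factor rather than breaking convergence.
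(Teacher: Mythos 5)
Your proof is correct, and it is essentially the argument behind the cited result: the paper itself does not reprove Theorem~\ref{thm:parallel} but imports it from \cite[Theorem~2]{Doerr21cgajump}, whose proof uses exactly your ingredients---the scheduling invariant that every started process has completed $2^L-1$ generations by the end of round $L$, the first process index with $2^{\ell-1}\ge\tilde{\mu}$, independence of the processes giving failure probability at most $(1-p)^{\ell-\ell^\ast}$, and a geometric series with ratio $2(1-p)\le 1/2$ absorbing the per-level cost $\Theta\bigl(2^{\ell}T(\ell+\log_2 T)\bigr)$, so that the $\log(\tilde{\mu}T)$ factor indeed stems from the number of concurrently running processes rather than from the success probability. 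One cosmetic slip: your concluding ``$E[\text{runtime}]=\Theta(\tilde{\mu}T(\log_2\tilde{\mu}+\log_2 T))$'' should read $O(\cdot)$ rather than $\Theta(\cdot)$, since the argument only establishes an upper bound.
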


Since the choice $b = \Theta(T / \tilde \mu)$ gives an asymptotic runtime of $O(\tilde \mu T)$ for the smart-restart cGA, we see that with the right choice of the parameters the smart-restart cGA can outperform the parallel-run cGA slightly. This shows that it indeed gains from its ability to abort unprofitable runs.

Our main motivation for regarding Assumption (L) was that this runtime behavior is often observed both in theoretical results (see, e.g., the survey~\cite{KrejcaW20}) and in experiments (see Section~\ref{sec:exper}). Unfortunately, some theoretical results were only proven under the additional assumption that $\mu$ is polynomially bounded in~$n$, that is, that $\mu = O(n^C)$ for some, possibly large, constant $C$. For most of these results, we are convinced that the restriction on $\mu$ is not necessary, but was only taken for convenience and in the light that super-polynomial values for $\mu$ would imply not very interesting super-polynomial runtimes. To extend such results to our smart-restart cGA in a formally correct manner, we now prove a version of Theorem~\ref{thm:nonpcGAwAssume} applying to such settings. More precisely, we regard the following assumption. 

\textbf{Assumption (L'):} Let $p \in (0,1)$. Consider using the cGA with population size~$\mu$ to maximize a given function $f$. Assume that there are unknown $\tilde{\mu}$, $\mu^+$,  and $T$ such that the cGA for all population sizes $\tilde \mu \le \mu \le \mu^+$ optimizes $f$ in $\mu T$ fitness evaluations with probability at least~$p$.

We prove the following result.

\begin{theorem}
Let  $U >1$ and $b > 0$. Consider using the smart-restart cGA with update factor $U$ and generation budget $B_{\ell}=b\mu_{\ell}^2, \ell=1,2,\dots$, optimizing a function $f$ satisfying Assumption~(L') with $p \in (1-\frac{1}{U^2},1)$. Let $\ell' = \min\{\ell \mid 2U^{\ell-1} \ge \tilde{\mu}, B_{\ell} \ge 2U^{\ell-1} T\}$ and $\calL := \{\ell \in \Z \mid \ell \ge \ell', 2 U^{\ell-1} \le \mu^+\}$. Then, apart from when an exceptional event of probability at most $(1-p)^{|\calL|}$ holds, the expected time until the optimum of $f$ is generated is at most 
\[
\left(\frac{U^2}{U^2-1}+\frac{(1-p)U^2}{1-(1-p)U^2}\right)\max\left\{b\tilde{\mu}^2,\frac{T^2}{b}\right\}+\frac{pU}{1-(1-p)U}\tilde{\mu}T
\]
fitness evaluations.
\label{thm:nonpcGAwAssume2}
\end{theorem}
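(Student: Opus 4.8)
The plan is to mirror the proof of Theorem~\ref{thm:nonpcGAwAssume} almost verbatim, the only new ingredient being that Assumption~(L') supplies the success-probability guarantee for the finitely many rounds in $\calL$ only. First I would record the facts that let the earlier argument run on $\calL$: since $\mu_\ell = 2U^{\ell-1}$ and $B_\ell = b\mu_\ell^2$ are increasing in $\ell$, the defining property of $\ell'$ propagates, so every round $\ell \ge \ell'$ satisfies $\mu_\ell \ge \tilde{\mu}$ and $B_\ell \ge \mu_\ell T$. Intersecting with the constraint $2U^{\ell-1} \le \mu^+$ that defines $\calL$, each round $\ell \in \calL$ uses a population size in $[\tilde{\mu},\mu^+]$, so by Assumption~(L') it finds the optimum with probability at least $p$ and, when it does, within $\mu_\ell T$ fitness evaluations.

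Next I would isolate the exceptional event. Because the cGA runs in distinct rounds use independent randomness, the failure events of the rounds $\ell \in \calL$ are independent and each has probability at most $1-p$; hence the event $F$ that all rounds in $\calL$ fail satisfies $\Pr[F] \le (1-p)^{|\calL|}$, which is the claimed bound. It then remains to bound the runtime on the complement $\bar F$. On $\bar F$ some round of $\calL$ succeeds, and I would charge the runtime exactly as in Theorem~\ref{thm:nonpcGAwAssume}, except that the sum over successful rounds now ranges only over $\calL$. Writing $c_i := \sum_{m=1}^{\ell'+i-1} B_m + 2U^{\ell'+i-1}T$ for the runtime incurred when the first successful round of $\calL$ is $\ell'+i$ (all earlier rounds, including those before $\ell'$, are charged their full budgets, and the successful round at most $\mu_{\ell'+i}T$ evaluations), the relevant quantity is a weighted average of $c_0,\dots,c_{|\calL|-1}$, to be compared with the full series $\sum_{i\ge 0}(1-p)^i p\,c_i$ that Theorem~\ref{thm:nonpcGAwAssume} evaluates to the stated closed form.

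The one point requiring care, and hence the main obstacle, is to verify that passing to $\bar F$ does not inflate the bound beyond the value of Theorem~\ref{thm:nonpcGAwAssume}. The structural fact that makes this work is that $c_i$ is nondecreasing in $i$. Two effects are then both favourable: replacing the true per-round success probabilities $p_\ell \ge p$ by the uniform value $p$ only stochastically delays the first success, hence only increases the average of the nondecreasing $c_i$; and restricting (or conditioning) the first-success index to the initial segment $\{0,\dots,|\calL|-1\}$ only decreases that average. For the conditional reading this is a one-line check: with $S:=\sum_{i=0}^{|\calL|-1}(1-p)^i p\,c_i$ and $w:=1-(1-p)^{|\calL|}$ one has $E[X\mid\bar F]\le S/w\le c_{|\calL|-1}$, whence $S/w \le S + c_{|\calL|-1}(1-w)\le \sum_{i\ge 0}(1-p)^i p\,c_i$, using $1-w=(1-p)^{|\calL|}$ and $c_i\ge c_{|\calL|-1}$ for $i\ge|\calL|$ in the last step.

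Once this comparison is in place, the remaining algebra is literally that of Theorem~\ref{thm:nonpcGAwAssume}: summing the geometric series in $B_i=4bU^{2i-2}$ and in $U^i$, invoking $(1-p)U\in(0,1)$ and $(1-p)U^2\in(0,1)$ from $p\in(1-\tfrac1{U^2},1)$, and applying the bound $2U^{\ell'-1}\le U\max\{\tilde{\mu},T/b\}$. This reproduces the stated expression, completing the proof.
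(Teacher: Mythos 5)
Your reduction to Theorem~\ref{thm:nonpcGAwAssume} and your identification of the exceptional event with probability $(1-p)^{|\calL|}$ match the paper's intent, but the step you yourself flag as the crux is genuinely broken as argued: the inequality $E[X\mid\bar F]\le S/w$ does not follow from stochastic dominance and is false in general. Dominance of the first-success index ($\Pr[I>k]\le\Pr[I'>k]$ when the true per-round success probabilities satisfy $p_\ell\ge p$ and $I'$ is geometric with parameter $p$) holds \emph{unconditionally}, but it is not preserved by conditioning on the event $\bar F=\{I\le |\calL|-1\}$. Concretely, take $|\calL|=2$, let round $\ell'$ succeed with probability exactly $p$ and round $\ell'+1$ with probability $1$ (both permitted by Assumption~(L'), which only gives lower bounds), and suppose the runtime attains the charges, with $c_0=0$ and $c_1=1$. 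Then $\Pr[\bar F]=1$ and $E[c_I\mid\bar F]=1-p$, whereas $S/w=\frac{(1-p)p}{1-(1-p)^2}=\frac{1-p}{2-p}<1-p$, so your claimed inequality fails. Nor can you apply your two ``favourable effects'' in the opposite order (condition the true process first, then pass to uniform $p$ unconditionally), because under (L') the unconditional quantity $E[c_I]$ is not controlled at all: rounds with $\mu_\ell>\mu^+$ may never succeed, so $I$ may be infinite with positive probability.

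The gap is repairable, and the theorem is of course true. One fix stays close to your outline but uses truncation instead of conditioning against the uniform process: writing $m:=|\calL|-1$, monotonicity of the $c_i$ gives $E[c_I\mid\bar F]\le E[c_{\min\{I,m\}}]$ (since $E[c_I\mid\bar F]\le c_m$), the truncated index $\min\{I,m\}$ \emph{is} stochastically dominated by $\min\{I',m\}$ because this comparison only involves the rounds in $\calL$ where (L') applies, and finally $E[c_{\min\{I',m\}}]=\sum_{i=0}^{m-1}(1-p)^ip\,c_i+(1-p)^mc_m\le\sum_{i\ge 0}(1-p)^ip\,c_i$; from there the algebra of Theorem~\ref{thm:nonpcGAwAssume} finishes the proof, and your chain $S/w\le S+c_m(1-w)\le\sum_{i\ge0}(1-p)^ip\,c_i$ becomes unnecessary. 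The paper's own proof takes a more structural route: on $\neg A$ the algorithm never executes a round with $\mu_\ell>\mu^+$, so the conditional law of the runtime is the same as in a hypothetical process satisfying the full Assumption~(L); moreover every outcome in $A$ has runtime at least $\sum_{\ell\le\max\calL}B_\ell$, which is at least as large as any runtime realized on $\neg A$, so under (L) conditioning on $\neg A$ can only decrease the expected runtime, and the unconditional bound of Theorem~\ref{thm:nonpcGAwAssume} transfers. Either argument closes the hole; what you cannot keep is the comparison of conditional expectations via $S/w$.
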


\begin{proof}
  Let $A$ be the event that none of the runs of the cGA with population size $\mu_\ell = 2 U^{\ell-1}$ at most $\mu^+$ finds the optimum of $f$. As in the proof of Theorem~\ref{thm:nonpcGAwAssume}, each of the runs using populations size $\mu_\ell$, $\ell \in \calL$, with probability at least $p$ finds the optimum. Hence the event $A$ occurs with probability at most $(1-p)^{|\calL|}$. Conditional on the event $\neg A$, the smart-restart cGA behaves as if we would have Assumption (L). Hence when conditioning on $\neg A$, we can work under the assumption $(L)$. Now we note that the event $A$ contains the largest runtime estimates, namely $\mu T$ for certain $\mu > \mu^+$. Hence, when assuming (L), then conditioning on $\neg A$ can only reduce the expected runtime. Consequently, the runtime estimate of Theorem~\ref{thm:nonpcGAwAssume} is valid when conditioning on $\neg A$, and this both when working with assumption (L) and (L'). This proves the claim.
\end{proof}

\subsection{Specific Runtime Results}

The following examples show how to combine our general runtime analysis with known runtime results to obtain performance guarantees for the smart-restart cGA on several specific problems.
  
\subsubsection{\onemax and \jump}
We recall the runtime results of the cGA on  \onemax~\cite{SudholtW19} and \jump~\cite{Doerr21cgajump}.
\begin{theorem}\cite{SudholtW19,Doerr21cgajump}
Let $K>0$ be a sufficiently large constant and let $C>0$ be any constant. Consider the cGA with population size $K \sqrt n \ln n \le \mu \le n^C$. 
\begin{itemize}
\item The expected runtime on the \onemax function is $O(\mu \sqrt n)$ {\cite[Theorem~2]{SudholtW19}}.
\item With probability $1-o(1)$, the optimum of the \jump function with jump size $k< \tfrac{1}{20} \ln n$ is found in time $O(\mu \sqrt n)$ \cite[Theorem~9]{Doerr21cgajump}.
\end{itemize}
\label{thm:omjump}
\end{theorem}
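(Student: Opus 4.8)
The statement recalls two established results, so the plan is to reconstruct the two frequency-drift arguments behind them; the \onemax bound follows the analysis of~\cite{SudholtW19} and the \jump bound that of~\cite{Doerr21cgajump}, and they rest on genuinely different mechanisms.

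For \onemax, I would track the potential $\Phi^g = \sum_{i=1}^n (1 - p_i^g)$, the distance of the frequency vector from the all-ones state, and first lower bound the one-step drift of $\sum_i p_i^g$. Since the bits are sampled independently, conditioning on the two individuals differing at position $i$ (probability $2 p_i(1-p_i)$), the individual carrying the $1$ there wins with probability $1/2$ plus a bias of order $1/\sqrt{V}$, where $V=\sum_j p_j(1-p_j)$ is the sampling variance of the fitness; this bias comes from a local limit estimate of the probability that flipping one bit reverses the comparison. Summing the per-coordinate increments $\tfrac1\mu p_i(1-p_i)/\sqrt V$ gives an aggregate drift of order $\sqrt V/\mu = \Theta(\sqrt n/\mu)$ while the frequencies are bounded away from the boundaries. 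Plugging this (variable) drift into the variable-drift theorem yields $O(\mu\sqrt n)$ generations to push all frequencies to the upper margin $1-1/n$; from there the all-ones string is sampled with probability $(1-1/n)^n=\Omega(1)$ per generation, so the optimum appears in $O(1)$ further generations.

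The part I expect to be the real obstacle, and the reason the hypothesis $\mu\ge K\sqrt n\ln n$ is needed, is controlling genetic drift: the positive-drift computation is only valid as long as no frequency has random-walked down to the lower margin $1/n$. I would bound the martingale fluctuation of a single frequency by a Hoeffding-type estimate over the $O(\mu\sqrt n)$ generations of a run; with step size $1/\mu$ and $\mu=\Omega(\sqrt n\ln n)$, the probability that a fixed frequency reaches $1/n$ prematurely is $n^{-\Omega(1)}$, and a union bound over the $n$ coordinates (which the $\ln n$ factor is tailored to absorb) keeps the total failure probability $o(1)$. This is exactly the genetic-drift quantification recalled from~\cite{DoerrZ20tec}; the upper cap $\mu\le n^C$ serves only to keep everything polynomial and plays no structural role.

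For \jump the starting point is that \jump and \onemax agree on all search points with at most $n-k$ ones, so while the sampled individuals stay below the fitness valley the cGA dynamics are identical to the \onemax dynamics above, and the frequencies rise to the upper margin in $O(\mu\sqrt n)$ generations by the same drift-plus-genetic-drift argument. The decisive and hardest step is then the valley crossing: once the frequencies reach $1-1/n$ the all-ones optimum is sampled with probability $(1-1/n)^n=\Omega(1)$ per generation, which jumps over the deceptive gap directly, whereas the competing deceptive signal moves any frequency by at most $1/\mu$ per generation and is therefore negligible over the $O(1)$ generations needed. The restriction $k<\tfrac1{20}\ln n$ is what guarantees that the all-frequencies-at-the-margin state is actually reached and that the reversed signal in the band of width $\Theta(k)$ below the top does not drag the frequencies back before the constant-probability jump; quantitatively one shows the downward pull over this band is dominated by the frequency concentration, which holds for $k$ up to a small constant times $\ln n$. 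Combining the ``reach the margin'' and ``jump from the margin'' phases and a union bound over the $o(1)$ failure events gives the $O(\mu\sqrt n)$ runtime with probability $1-o(1)$.
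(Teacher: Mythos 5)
This theorem is not proved in the paper at all: it is a verbatim recollection of \cite[Theorem~2]{SudholtW19} and \cite[Theorem~9]{Doerr21cgajump}, cited as external results. So your proposal is measured against those original proofs, not against anything in this paper. Your \onemax paragraph is a faithful sketch of the Sudholt--Witt argument: the decisive-bit bias of order $1/\sqrt{V}$, the resulting drift $\Theta(\sqrt{V}/\mu)$ of the frequency sum, a variable-drift argument, and martingale concentration (with the union bound over $n$ positions absorbing the $\ln n$ factor in $\mu \ge K\sqrt{n}\ln n$) to exclude genetic drift. One caveat: since the first bullet claims an \emph{expected} runtime, the failure event must be folded back into the expectation, and it is exactly there that the cap $\mu \le n^C$ is used in \cite{SudholtW19}; it is not purely cosmetic, even though the present paper's authors conjecture it could be removed.

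The \jump part, however, has a genuine gap. Your mechanism is ``drive all frequencies to the margin $1-1/n$, then sample the optimum with probability $(1-1/n)^n = \Omega(1)$ per generation.'' This cannot work. Once the frequency distance $D = \sum_i (1-p_i)$ drops to roughly $k$, a typical sample has fewer than $k$ zeros and therefore lies \emph{inside} the fitness valley; at the all-margin state essentially every sample does (it has $O(1)$ zeros while $k$ may be as large as $\tfrac{1}{20}\ln n$). In that regime selection is reversed: between two valley samples the one with more zeros wins, and between a valley and a non-valley sample the non-valley one (with fewer ones) wins, so the dominant drift points \emph{away} from the optimum. The band of width $\Theta(k)$ below the top is not a small perturbation that ``frequency concentration'' dominates --- it is where the sign of the drift flips, and the all-margin state is neither reachable nor stable. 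The actual argument in \cite{Doerr21cgajump} instead exploits that the optimum is already sampled from distance $D = \Theta(\ln n)$, where valley samples are still rare: there the per-generation probability of hitting the optimum is $e^{-\Theta(D)} = n^{-\Theta(1)}$ --- polynomially small, not constant --- and the hypothesis $k < \tfrac{1}{20}\ln n$ is what makes the exponents work out so that the resulting waiting time of $n^{\Theta(1)}\mathrm{polylog}(n) = o(\sqrt n)$ generations fits inside the $O(\mu\sqrt n)$ budget before the deceptive pull matters. So your final phase must be replaced by this ``jump from distance $\Theta(\ln n)$ with polynomially small probability'' phase; as written, the proof of the second bullet does not go through.
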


We note that the \jump result also applies to \onemax simply because the \jump function with jump size $k=1$ has a fitness landscape that can in a monotonic manner be transformed into the one of the \onemax function. Hence for $n$ sufficiently large, we have Assumption~(L') satisfied with $\tilde \mu =  K \sqrt n \ln n$, $\mu^+ = n^C$, $T = O(\sqrt n)$, and $p = 1-o(1)$. Consequently, for any (constant) $U > 1$ we have $p\in (1-\frac{1}{U^2},1)$. Given the above information on $\tilde \mu$ and $T$, we see that any $b \in n^{-o(1)} \cap n^{o(1)}$ gives that $\ell' = \min\{\ell \mid 2U^{\ell-1} \ge \tilde{\mu}, B_{\ell} \ge 2U^{\ell-1} T\} = (1 \pm o(1)) \frac 12 \log_{U}(n)$. Since $\mu^+ = n^C$, we have $|\calL| = (1 \pm o(1)) (C - \frac 12) \log_{U}(n)$. With Theorem~\ref{thm:nonpcGAwAssume2}, we have the following result.

\begin{theorem}
Consider the smart-restart cGA with update factor $1 < U$ optimizing the \jump function with jump size $k< \tfrac{1}{20} \ln n$ or the \onemax function. Then, apart from a rare event of probability at most $n^{-\omega(1)}$, we have the following estimates for the expected runtime.
\begin{itemize}
\item If the budget factor $b$ is $\Theta(1/\log n)$, then the expected runtime is $O(n \log n)$.
\item If the budget factor $b$ is between $\Omega(1/\log^2 n)$ and $O(1)$, then the expected runtime is $O(n \log^2 n)$.
\end{itemize}
\label{thm:smartomjump}
\end{theorem}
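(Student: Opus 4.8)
The plan is to instantiate the general bound of Theorem~\ref{thm:nonpcGAwAssume2} with the parameters already extracted from Theorem~\ref{thm:omjump} and then simplify the resulting expression in the two ranges of~$b$. First I would record that, as argued in the paragraph preceding the statement, Assumption~(L') holds with $\tilde\mu = K\sqrt n\ln n$, $\mu^+ = n^C$, $T = O(\sqrt n)$, and $p = 1-o(1)$; for \onemax this comes from the \jump bound with $k=1$, using that its fitness landscape is a monotone reparametrisation of \onemax. For any constant $U>1$ and $n$ large enough, $p = 1-o(1)$ lies in $(1-\tfrac1{U^2},1)$, so the hypotheses of Theorem~\ref{thm:nonpcGAwAssume2} are satisfied and we may use its conclusion.

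Next I would pin down $\ell'$ and $\calL$. Both prescribed ranges of $b$ satisfy $b \in n^{-o(1)}\cap n^{o(1)}$, so the terms $\log_U(1/b)$ and $\log_U\log n$ are of lower order than $\log_U n$. Solving $2U^{\ell-1}\ge\tilde\mu$ and $b(2U^{\ell-1})^2\ge 2U^{\ell-1}T$ then gives $\ell' = (1\pm o(1))\tfrac12\log_U n$, and truncating at $2U^{\ell-1}\le\mu^+=n^C$ gives $|\calL| = (1\pm o(1))(C-\tfrac12)\log_U n = \Theta(\log n)$, where we use $C>\tfrac12$ (which holds since $\mu^+>\tilde\mu$). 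From this the exceptional probability is controlled: Theorem~\ref{thm:nonpcGAwAssume2} bounds it by $(1-p)^{|\calL|}$, and since $1-p=o(1)$ we have $\ln\tfrac1{1-p}=\omega(1)$, whence $(1-p)^{|\calL|} = \exp\bigl(-|\calL|\ln\tfrac1{1-p}\bigr) = \exp(-\omega(\log n)) = n^{-\omega(1)}$.

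Finally, conditioning on the complement of that rare event, I would read off the runtime from Theorem~\ref{thm:nonpcGAwAssume2}. With $U = O(1)$ and $p = 1-o(1)$, the three prefactors are $\tfrac{U^2}{U^2-1}=O(1)$, $\tfrac{(1-p)U^2}{1-(1-p)U^2}=o(1)$ (its denominator is bounded away from $0$ because $(1-p)U^2<1$), and $\tfrac{pU}{1-(1-p)U}=O(1)$, so the bound collapses to $O(\max\{b\tilde\mu^2,\,T^2/b,\,\tilde\mu T\})$. Substituting $\tilde\mu^2 = \Theta(n\log^2 n)$, $T^2 = O(n)$, and $\tilde\mu T = O(n\log n)$ settles both cases: for $b = \Theta(1/\log n)$ all three terms equal $\Theta(n\log n)$, giving $O(n\log n)$; for $b$ between $\Omega(1/\log^2 n)$ and $O(1)$ the terms $b\tilde\mu^2$ and $T^2/b$ are each $O(n\log^2 n)$ and dominate, giving $O(n\log^2 n)$.

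I expect no conceptual difficulty, as the argument is essentially bookkeeping on top of Theorem~\ref{thm:nonpcGAwAssume2}. The one place that needs care is the asymptotic evaluation of $\ell'$ and $|\calL|$: I must verify that the non-constant budget factor $b$ together with the $\ln n$ hidden in $\tilde\mu$ perturb these only by a $(1\pm o(1))$ factor, so that $|\calL|$ remains $\Theta(\log n)$. This is precisely what drives the failure probability down to $n^{-\omega(1)}$ rather than merely a polynomial, and what keeps the dominant term of the maximum in the stated form.
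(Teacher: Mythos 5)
Your proposal is correct and follows essentially the same route as the paper: the paper's own (terse) argument is precisely to verify Assumption~(L') via Theorem~\ref{thm:omjump} with $\tilde\mu = K\sqrt n\ln n$, $\mu^+=n^C$, $T=O(\sqrt n)$, $p=1-o(1)$, compute $\ell' = (1\pm o(1))\tfrac12\log_U n$ and $|\calL| = (1\pm o(1))(C-\tfrac12)\log_U n$, and then invoke Theorem~\ref{thm:nonpcGAwAssume2}. Your additional bookkeeping (the $n^{-\omega(1)}$ failure probability from $(1-p)^{|\calL|}$ and the case-by-case evaluation of $\max\{b\tilde\mu^2, T^2/b, \tilde\mu T\}$) just makes explicit what the paper leaves implicit, so there is nothing to correct.
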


Hence, our smart-restart cGA with $b=\Theta(1/\log n)$ has essentially the same time complexity as the original cGA (Theorem~\ref{thm:omjump}) with optimal population size. A constant $b$ results in  a slightly inferior runtime of $O(n \log^2 n)$, which is also the runtime guarantee for the parallel-run cGA (Theorem~\ref{thm:parallel}).

\subsubsection{Noisy \onemax}

For another example, we recall the runtime of the cGA (without artificial margins) on the \onemax function with additive centered Gaussian noise from~\cite{FriedrichKKS17}.

\begin{theorem}[{\cite[Theorem~5]{FriedrichKKS17}}]
Consider the $n$-dimensional \onemax function with additive centered Gaussian noise with variance $\sigma^2>0$. Then with probability $1-o(1)$, the cGA (without margins) with population size $\mu=\omega(\sigma^2\sqrt n \log n)$ has all frequencies at $1$ in $O(\mu \sigma^2\sqrt n \log (\mu n))$ iterations.
\label{thm:cgaFKKS}
\end{theorem}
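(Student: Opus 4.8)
The plan is to analyze the evolution of the frequency vector $p^g$ directly by drift analysis, exploiting that on \onemax the single-bit fitness signal always favours a one. Since the cGA is used without margins, each frequency $p_i$ performs a random walk on $[0,1]$ with absorbing boundaries $0$ and $1$, and the target ``all frequencies at $1$'' is reached exactly when every walk has been absorbed at the upper boundary (after which every sample equals the all-ones optimum). The central quantities are therefore, for each coordinate $i$, the expected one-step change $E[\Delta p_i \mid p]$ and the one-step variance, and the first structural fact I would establish is that $E[\Delta p_i \mid p]\ge 0$ for every $i$ and every $p$: unlike the deceptive problems, noisy \onemax never produces a drift towards the wrong boundary.

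To make this precise I would condition on the two sampled individuals. The frequency $p_i$ moves only when $X_{1,i}\neq X_{2,i}$, which happens with probability $2p_i(1-p_i)$, and it then moves towards the bit of the individual with larger noisy fitness. Writing the noisy fitness difference as $\pm 1 + R + N$, where $\pm 1$ is the contribution of coordinate $i$, $R=\sum_{j\neq i}(X_{1,j}-X_{2,j})$ is the symmetric contribution of the other coordinates, and $N\sim\mathcal{N}(0,2\sigma^2)$ is the difference of the two independent noise samples, a short symmetry argument gives
\[
E[\Delta p_i \mid p]=\frac{2}{\mu}\,p_i(1-p_i)\,\Pr[\,|R+N|\le 1\,]\;\ge\;0.
\]
Thus noise never biases the walk downward; it only shrinks the factor $\Pr[|R+N|\le 1]$. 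Since $R+N$ carries a Gaussian component and $\mathrm{Var}(R)+2\sigma^2=\sum_{j\ne i}2p_j(1-p_j)+2\sigma^2=O(n+\sigma^2)$, I would lower-bound this probability in terms of this variance; this is precisely where the dependence on $\sigma^2$ enters, and a sanity check against the noise-free case ($\sigma=0$, $\Pr[|R|\le 1]=\Omega(1/\sqrt n)$, summing to the known $O(\mu\sqrt n)$) confirms the computation.

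The main obstacle, and the reason for the condition $\mu=\omega(\sigma^2\sqrt n\log n)$, is not the speed of the upward drift but ruling out that some frequency is absorbed at the \emph{lower} boundary $0$ before the weak positive drift carries all of them to $1$; once a bit is lost it can never become optimal. I would bound the escape probability of a single frequency by a supermartingale / negative-drift argument (e.g.\ showing $\exp(-\lambda p_i)$ is a supermartingale and applying a Hoeffding-type tail bound to the $O(1/\mu)$-bounded increments), so that a sufficiently large $\mu$ makes this probability $n^{-\omega(1)}$; a union bound over the $n$ coordinates and over the whole run then yields failure probability $o(1)$, and it is this union bound that forces the extra $\log n$ in the population-size condition and the $\log(\mu n)$ factor in the runtime. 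Conditioned on this good event I would finish with a variable-drift theorem applied to $\Phi=\sum_i(1-p_i)$, whose expected decrease is $\Omega\bigl(\tfrac{1}{\mu}\sum_i p_i(1-p_i)\,\Pr[|R+N|\le 1]\bigr)$, and integrating this drift down to $0$ produces the stated $O(\mu\sigma^2\sqrt n\log(\mu n))$ bound. I expect the delicate part to be the \emph{simultaneous} quantitative control of the lower-boundary escape probability and of the drift estimate uniformly over $p_i\in(0,1)$: both the drift and the variance degenerate as $p_i\to 1$, and reconciling the conservative population-size requirement with the sharper drift (which, consistent with this paper's later experimental remark that the \cite{FriedrichKKS17} sizes are far larger than necessary, suggests the bound is a sufficient rather than tight condition) is where the population size, the noise variance, and the logarithmic factors are tied together.
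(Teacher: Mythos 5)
You should first know that the paper contains no proof of this statement at all: Theorem~\ref{thm:cgaFKKS} is quoted, citation in the header, from \cite{FriedrichKKS17} and serves purely as an imported ingredient for the derivation of Theorem~\ref{thm:noisyom}. So there is no in-paper argument to measure your proposal against; the only meaningful comparison is with the proof in the cited source, which your sketch in fact mirrors in structure: condition on the two samples differing in bit $i$, write the noisy fitness difference as $\pm 1 + R + N$ with $R$ the symmetric contribution of the other positions and $N\sim\mathcal{N}(0,2\sigma^2)$, deduce by symmetry that $E[\Delta p_i\mid p]=\frac{2}{\mu}p_i(1-p_i)\Pr[|R+N|\le 1]\ge 0$ (this computation is correct), rule out absorption at the lower boundary, and integrate the drift until all frequencies are absorbed at $1$.

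Two points in your outline would need repair before it becomes a proof. First, your variance heuristic suggests a per-move bias of order $1/\sqrt{n+\sigma^2}$, which for large $\sigma$ is \emph{stronger} than what the stated bound reflects; the theorem only requires the weaker uniform anti-concentration estimate $\Pr[|R+N|\le 1]=\Omega(1/(\sigma^2\sqrt n))$, and you should verify that your final variable-drift integration, including the slowdown of the walk as $p_i\to 1$ (moves occur only with probability $2p_i(1-p_i)$ per iteration, which is where the $\log(\mu n)$ factor enters), actually lands on $O(\mu\sigma^2\sqrt n\log(\mu n))$. Second, and more substantively, the supermartingale you propose for avoiding absorption at $0$ is formulated for the unconditional drift, which degenerates proportionally to $p_i(1-p_i)$ as $p_i\to 0$, so that argument as written does not close; the clean fix, which you gesture at but do not carry out, is to pass to the embedded walk conditional on a move occurring. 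That walk has steps $\pm 1/\mu$ with bias exactly $\Pr[|R+N|\le 1]$ per move, uniformly over the other frequencies, so a gambler's-ruin estimate from starting distance $\mu/2$ gives absorption probability at $0$ of $\exp(-\Omega(\mu\cdot\Pr[|R+N|\le 1]))$, and it is precisely the hypothesis $\mu=\omega(\sigma^2\sqrt n\log n)$ that makes this $n^{-\omega(1)}$ and lets it survive the union bound over the $n$ positions and the whole run. With these two repairs your reconstruction is sound and follows the same route as the original; just be clear that it reconstructs an external result, not anything proved in this paper.
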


We note that here the cGA is used without restricting the frequencies to the interval $[1/n,1-1/n]$, whereas more commonly (and in the remainder of this paper) the cGA is equipped with the margins $1/n$ and $1-1/n$ to avoid that frequencies reach the absorbing boundaries $0$ or $1$. Since our general runtime results do not rely on such implementation details but merely lift a result for a particular cGA to its smart-restart version, this poses no greater problems for us now. As a side remark, though, we note that we are very optimistic that the above result from~\cite{FriedrichKKS17} holds equally well for the setting with frequency margins. 

More interestingly, the runtime result above is not of the type that for $\mu$ sufficiently large, the expected runtime is $O(\mu T)$ for some $T$ (since $\mu$ appears also in the $\log(\mu n)$ term). Fortunately, with Theorem~\ref{thm:nonpcGAwAssume2} at hand, we have an easy solution. By only regarding values of $\mu$ that are at most $n^C$ for some constant $C$ (which we may choose), the $\log(\mu n)$ term can by bounded by $O(\log n)$. Since the minimal applicable $\mu$ (the $\tilde \mu$ in the notation of Theorem~\ref{thm:nonpcGAwAssume2}) depends on $\sigma^2$, this also implies that we can only regard polynomially bounded variances, but it is clear that any larger variances can be only of a purely academic interest. We thus formulate and prove the following result. We note that with more work, we could also have extended Theorem~\ref{thm:nonpcGAwAssume} to directly deal with the runtime behavior described in Theorem~\ref{thm:cgaFKKS}, mostly by exploiting that the geometric series showing up in the analysis do not change significantly when an extra logarithmic term is present, but this appears to be a lot of work for a logarithmic term for which it is not even clear if it is necessary in the original result.

\begin{theorem}
Let $C\ge 1$,  $U > 1$, and $h : [0,\infty) \to [0,\infty)$ in $\omega(1) \cap n^{o(1)}$. Consider the smart-restart cGA with the update factor $U$ and generation budget factor $b$ optimizing the $n$-dimensional \onemax function with additive centered Gaussian noise with variance $\sigma^2 \le n^C$. Then outside a rare event holding with probability $n^{-\omega(1)}$, the following runtime estimates are true.
\begin{itemize}
\item If $b=\Theta(1/h(n))$, then the expected runtime is $O(h(n) \sigma^4 n \log^2 n)$.
\item If $b = O(1) \cap \Omega(1/\log^2 n)$, then the expected runtime is $O(h(n) \sigma^4 n \log^3 n)$.
\end{itemize}
\label{thm:noisyom}
\end{theorem}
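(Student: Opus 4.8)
The plan is to deduce Theorem~\ref{thm:noisyom} from the general guarantee of Theorem~\ref{thm:nonpcGAwAssume2} by recasting the noisy-\onemax runtime of Theorem~\ref{thm:cgaFKKS} into an instance of Assumption~(L'). The reason I would use the (L')-version rather than Theorem~\ref{thm:nonpcGAwAssume} is exactly the $\log(\mu n)$ factor in Theorem~\ref{thm:cgaFKKS}: that runtime is not of the clean shape $\mu T$. To repair this, I first introduce a polynomial cap $\mu^+ = n^{C'}$ on the population sizes considered, with $C'$ a constant chosen large enough relative to $C$ (say $C' = C+1$). For every $\mu \le \mu^+$ we then have $\log(\mu n) = O(\log n)$, so Theorem~\ref{thm:cgaFKKS} yields optimization within $O(\mu\,\sigma^2\sqrt n\,\log n)$ iterations, i.e.\ within $\mu T$ fitness evaluations for $T = O(\sigma^2\sqrt n\,\log n)$ (the factor two from sampling two individuals per generation being absorbed into the $O$-notation).

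Next I would fix the lower threshold $\tilde\mu$. Since Theorem~\ref{thm:cgaFKKS} asserts success probability $1-o(1)$ only for $\mu = \omega(\sigma^2\sqrt n\,\log n)$, I would use the slowly growing function $h \in \omega(1)\cap n^{o(1)}$ to set $\tilde\mu = \Theta(h(n)\,\sigma^2\sqrt n\,\log n)$. Because $h(n) = \omega(1)$, every $\mu \ge \tilde\mu$ meets the $\omega(\sigma^2\sqrt n\,\log n)$ hypothesis, so the success probability is $p = 1-o(1)$ uniformly over the whole range $\tilde\mu \le \mu \le \mu^+$, which verifies Assumption~(L'). The hypothesis $\sigma^2 \le n^C$ guarantees that $\tilde\mu$ is polynomially bounded and that $\tilde\mu \le \mu^+$ for $n$ large, and, together with $U$ being constant, that $p \in (1 - \tfrac{1}{U^2}, 1)$ for $n$ large, as Theorem~\ref{thm:nonpcGAwAssume2} requires.

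With the hypotheses in place, I would invoke Theorem~\ref{thm:nonpcGAwAssume2}. Exactly as in the discussion preceding Theorem~\ref{thm:smartomjump}, one checks $|\calL| = \Theta(\log n)$; since $1-p = o(1)$, the exceptional event has probability $(1-p)^{|\calL|} = n^{-\omega(1)}$, which is the claimed rare event. Outside this event, and using that both prefactors in the bound of Theorem~\ref{thm:nonpcGAwAssume2} are $O(1)$ for constant $U$ and $p = 1-o(1)$, the expected runtime is $O(\max\{b\tilde\mu^2,\,T^2/b,\,\tilde\mu T\})$. It then remains to substitute $\tilde\mu^2 = \Theta(h(n)^2\sigma^4 n\log^2 n)$, $T^2 = O(\sigma^4 n\log^2 n)$, and $\tilde\mu T = \Theta(h(n)\sigma^4 n\log^2 n)$ and to evaluate this maximum in the two regimes for $b$, in complete analogy with the \onemax/\jump computation of Theorem~\ref{thm:smartomjump}. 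For $b = \Theta(1/h(n))$ the three terms balance at $\Theta(\tilde\mu T)$, giving $O(h(n)\sigma^4 n\log^2 n)$; for the constant-order budget factors $b = O(1)\cap\Omega(1/\log^2 n)$ the terms $b\tilde\mu^2$ and $T^2/b$ are no longer balanced and the bound grows by a logarithmic factor to $\tilde\mu T\,\log n = O(h(n)\sigma^4 n\log^3 n)$, coinciding with the parallel-run guarantee of Theorem~\ref{thm:parallel}.

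I expect the main obstacle to be entirely in the first two steps, namely the faithful translation of Theorem~\ref{thm:cgaFKKS} into an Assumption~(L') instance: taming the $\mu$-dependent $\log(\mu n)$ factor through the polynomial cap $\mu^+$ (which is precisely what the (L')-refinement was designed for) and extracting from the $\omega(\sigma^2\sqrt n\,\log n)$ condition a fixed threshold $\tilde\mu$ together with a success probability $p$ that is uniform over the admissible range. Once this translation is carried out correctly, the remainder is the routine substitution into the closed-form bound of Theorem~\ref{thm:nonpcGAwAssume2} and the short case analysis on $b$ described above.
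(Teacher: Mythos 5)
Your proposal matches the paper's proof essentially step for step: the paper likewise instantiates Assumption~(L') from Theorem~\ref{thm:cgaFKKS} with $\tilde\mu = h(n)\,\sigma^2\sqrt n\,\ln n$, a polynomial cap $\mu^+$ (the paper takes $n^{2C}$ rather than your $n^{C+1}$, an immaterial difference), $T = O(\sigma^2\sqrt n\log n)$, and $p=1-o(1)$, then computes $\ell'$ and $|\calL| = \Theta(\log n)$ to obtain the $n^{-\omega(1)}$ exceptional probability, and finally invokes Theorem~\ref{thm:nonpcGAwAssume2}. If anything, your write-up is more explicit than the paper's, since you carry out the concluding substitution into $O(\max\{b\tilde\mu^2,\,T^2/b,\,\tilde\mu T\})$ and the case analysis on $b$, which the paper compresses into ``we have proven our claim.''
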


\begin{proof}
  By Theorem~\ref{thm:cgaFKKS}, we have Assumption (L') satisfied with $\tilde \mu = h(\mu) \sigma^2 \sqrt n \ln(n)$, $\mu^+ = n^{2C}$, $T = O(\sigma^2 \sqrt n \log n)$, and $p = 1-o(1)$. Consequently, any $b \in n^{-o(1)} \cap n^{o(1)}$ gives that $\ell' = \min\{\ell \mid 2U^{\ell-1} \ge \tilde{\mu}, B_{\ell} \ge 2U^{\ell-1} T\} = (1 \pm o(1)) (\frac 12 \log_{U}(n) + \log_{U}(\sigma^2)) \le (1+o(1)) C \log_{U}(n)$. Since $\mu^+ = n^{2C}$, we have $|\calL| \ge (1 \pm o(1)) C \log_{U}(n)$. With Theorem~\ref{thm:nonpcGAwAssume2}, we have proven our claim.
\end{proof}

We remark that the parallel-run cGA has an expected runtime of $O(h(n) \sigma^4 n \log^3 n)$ outside a rare event of probability $n^{-\omega(n)}$.

\section{Experimental Results}
\label{sec:exper}

In this section, we experimentally analyze the smart-restart cGA proposed in this work. Since such data is not available from previous works, we start with an investigation how the runtime of the original cGA depends on the population size $\mu$. This will in particular support the basic assumption underlying the smart-restart cGA (and the parallel-run cGA from~\cite{Doerr21cgajump}) that the runtime can be excessively large when $\mu$ is below some threshold, and moderate and linearly increasing with $\mu$ when $\mu$ is larger than this threshold.

Since the choice of the right population size is indeed critical for a good performance of the cGA, we then analyze the performance of the two existing approaches to automatically deal with the problem of choosing $\mu$. Our focus is on understanding how one can relieve the user of an EDA from the difficult task of setting this parameter, not on finding the most efficient algorithm for the benchmark problems we regard. For this reason, we do not include other algorithms in this investigation. 

\subsection{Test Problems}

Based on the above goals, we selected the four benchmark functions \onemax, \LO, \jump, and \DLB as optimization problems. For most of them also some mathematical runtime analyses exist, which help to understand and interpret the experimental results.

All four problems are defined on binary representations (bit strings) and we use $n$ to denote their length. The \textbf{\onemax} problem is one of the easiest benchmark problems. The \onemax fitness of a bit string is simply the number of ones in the bit string. Having the perfect fitness-distance correlation, most evolutionary algorithms find it easy to optimize \onemax, a common runtime is $\Theta(n \log n)$. Also, mathematical runtime analyses are aided by its simple structure (see, e.g.,~\cite{Muhlenbein92,GarnierKS99,JansenJW05,Witt06,RoweS14,DoerrK15,AntipovDFH18}), though apparently for EDAs the runtime of \onemax is highly non-trivial. The known results for EDAs are the following. The first mathematical runtime analysis for EDAs by Droste~\cite{Droste06} together with the recent work~\cite{SudholtW19} shows that the cGA can efficiently optimize \onemax in time $\Theta(\mu\sqrt n)$ when $\mu \ge K \sqrt n \ln(n)$ for some sufficiently large constant $K$. As the proofs of this result show (and the same could be concluded from the general result~\cite{DoerrZ20tec}), in this parameter regime there is little genetic drift. Throughout the runtime, with high probability, all bit frequencies stay above $\frac 14$. For hypothetical population sizes below the $\sqrt n \log n$ threshold, the situation is less understood. However, the lower bound of $\Omega(\mu^{1/3} n)$ valid for all $\mu = O\left(\frac{\sqrt n}{\ln(n) \ln\ln(n)}\right)$ proven in~\cite{LenglerSW18} together with its proof shows that in this regime the cGA suffers from genetic drift, leading to (mildly) higher runtimes.

The \textbf{\LO} benchmark is still an easy unimodular problem, however, typically harder than \onemax. The \leadingones value of a bit string is the number of ones in it, counted from left to right, until the first zero. How simple randomized search heuristics optimize \leadingones is extremely well understood~\cite{DrosteJW02,JansenJW05,Witt06,BottcherDN10,Sudholt13,Doerr19tcs,DoerrDL19,LissovoiOW20}, many EAs optimize this benchmark in time $\Theta(n^2)$. Surprisingly, no theoretical results are known on how the cGA optimizes \leadingones. However, the runtime of another EDA, the UMDA, with population sizes $\mu = \Theta(\lambda)$ with suitable implicit constants and $\lambda = \Omega(\log n)$ was shown to be $O(n \lambda \log(\lambda) + n^2)$~\cite{DangL15} and, recently, $\Theta(n \lambda)$ for $\lambda = \Omega(n \log n)$~\cite{DoerrK21tcs}. Without going into details on this EDA not discussed so far in this work, we remark that~\cite{DoerrZ20tec} for this situation shows that genetic drift occurs when $\lambda$ is below a threshold of $\Theta(n)$. Consequently, these results show a roughly linear influence of $\lambda$ on the runtime when $\lambda$ is (roughly) at least linear in $n$, but below this value, there is apparently no big penalty for running the EDA in the genetic drift regime. For the cGA, we will observe a different behavior, which also indicates that translating general behaviors from one EDA to another, even within the class of univariate EDAs, has to be done with caution.

The \textbf{\jump} benchmark is a class of multimodal fitness landscapes of scalable difficulty. For a difficulty parameter $k$, the fitness landscape is isomorphic to the one of \onemax except that there is a valley of low fitness of width $k$ around the optimum. More precisely, all search points in distance $1$ to $k-1$ from the optimum have a fitness lower than all other search points. Recent results~\cite{HasenohrlS18,Doerr21cgajump} show that when $\mu$ is large enough (so that the genetic drift is low, that is, all bit frequencies stay above $\frac 14$), then the cGA can optimize \jump functions quite efficiently and significantly more efficient than many classic evolutionary algorithms. We omit some details and only mention that for $k$ not too small, a runtime exponential in $k$ results from a population size $\mu$ that is also exponential in $k$. This is much better than the $\Omega(n^k)$ runtime of typical mutation-based evolutionary algorithms~\cite{DrosteJW02,DoerrLMN17,RajabiW20,Doerr20gecco} or the $n^{O(k)}$ runtime bounds shown for several crossover-based algorithms~\cite{DangFKKLOSS16,AntipovDK20} (we note that $O(n)$ and $O(n \log n)$ runtimes have been shown in \cite{WhitleyVHM18,RoweA19}, however, these algorithms appear quite problem-specific and have not been regarded in other contexts so far). It was not known whether the runtime of the cGA becomes worse in the regime with genetic drift, but our experimental results now show this. 

The \textbf{\DLB} benchmark was introduced in~\cite{LehreN19foga}. It can be seen as a deceptive version of the \leadingones benchmark. In \DLB, the bits are partitioned into blocks of length two in a left-to-right fashion. The fitness is computed as follows. Counting from left to right, each block that consists of two ones contributes two to the fitness, until the first block is reached that does not consist of two ones. This block contributes one to the fitness if it consists of two zeros, otherwise it contributes zero. All further blocks do not contribute to the fitness. The main result in~\cite{LehreN19foga} is that when $\mu = \Theta(\lambda)$ and $\lambda = o(n)$, the runtime of the UMDA on \DLB is exponential in $\lambda$. With $\lambda$ as small as $o(n)$ and a runtime that is at least quadratic, this result lies in a regime with strong genetic drift according to~\cite{DoerrZ20tec}. When $\lambda = \Omega(n \log n)$, a runtime of approximately $\frac 12 \lambda n$ was shown in~\cite{DoerrK20evocop}. Hence for this function and the UMDA as optimizer, the choice of the population size is again very important. This was the reason for including this function into our set of test problems and the results indicate that indeed the cGA shows a behavior similar to what the mathematical results showed for the UMDA. Other runtime results on the \DLB function include several $O(n^3)$ runtime guarantees for classic EAs~\cite{LehreN19foga} as well as a $\Theta(n^2)$ runtime for the Metropolis algorithm and an $O(n \log n)$ runtime guarantee for the significance-based cGA~\cite{WangZD21}.

\subsection{Experimental Settings}

We ran the original cGA (with varying population sizes), the parallel-run cGA, and our smart-restart cGA (with two generation budget factors) on each of the above-described four problems, both in the classic scenario without noise and in the presence of Gaussian posterior noise of four different strengths. For each experiment for the parallel-run cGA and our smart-restart cGA, we conducted 20 independent trials. For reasons of extremely large runtimes in the regime with genetic drift, only 10 independent trials were conducted for the original cGA for all tested population sizes. The detailed settings for our experiments were as follows.
\begin{itemize}
\item Benchmark functions: \onemax (problem size $n=100$), \LO ($n=50$), \jump ($n=50$ and the jump size $k=10$), and \DLB ($n=30$).
\item Noise model: additive centered Gaussian posterior noise with variances $\sigma^2=\{0,n/2,n,2n,4n\}$. See Section~\ref{ssec:intronoise} for a detailed description.
\item Since the original cGA with unsuitable population sizes did not find the optimum in reasonable time, we imposed the following maximum numbers of generations and aborted the run after this number of generations: $n^5$ for \onemax and \LO, $n^{k/2}$ for \jump, and $10n^5$ for \DLB. We did not define such a termination criterion for the parameter-less versions of the cGA since they always found the optimum in an affordable time.
\item Population size of the original cGA: $\mu=2^{[5..10]}$ for \onemax, $\mu=2^{[2..10]}$ for \LO, $\mu=2^{[9..18]}$ for \jump, and $\mu=2^{[1..14]}$ for \DLB. 
\item Generation budget factor $b$ for the smart-restart cGA: $8$ and $0.5/\ln n$. As explained in the introduction, the generation budget factors $b=8$ and $\Theta(1 / \ln n)$ are two proper choices. We chose the constant $0.5$ based on the experimental results on \jump and \DLB without noise (noise variance $\sigma^2=0$) in Figures~\ref{fig:jump} and~\ref{fig:dlb}. To avoid an overfitting, we ignored all other experiments when choosing the constant. 
\item Update factor $U$ for the smart-restart cGA: $2$. Doubling the parameter value after each unsuccessful run ($U=2$) is a natural choice~\cite{Doerr21cgajump} for a sequential parameter search. 

\end{itemize}

\subsection{Experimental Results and Analysis I: The cGA with Different Population Sizes}
 
Figures~\ref{fig:om}-\ref{fig:dlb} (the curves except the last three items on the $x$-axis in each figure) show the runtime (measured by the number of fitness evaluations) of the original cGA with different population sizes when optimizing our four test functions under Gaussian noise with different variances (including the noise-free setting $\sigma^2=0$). 

\begin{figure}[!ht]
\centering
\includegraphics[width=5.0in]{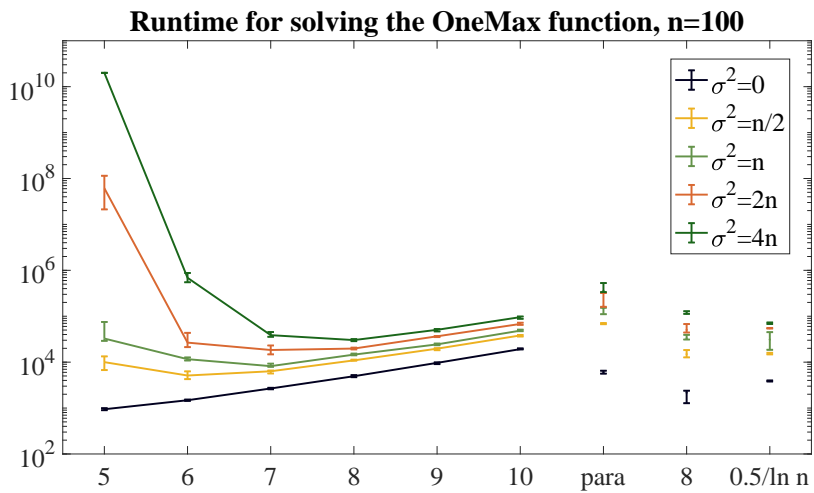}
\caption{The median number of fitness evaluations (with the first and third quartiles) of the original cGA with different $\mu$ ($\log_2 \mu \in \{5,6,\dots,10\}$), the parallel-run cGA (``para''), and the smart-restart cGA with two budget factors ($b=8$ and $b=0.5/\ln n$) on the \onemax function ($n=100$) under Gaussian noise with variances $\sigma^2=0, n/2, n, 2n, 4n$ in 20 independent runs (10 runs for the original cGA). }
\label{fig:om}
\end{figure} 

\begin{figure}[!ht]
\centering
\includegraphics[width=5.0in]{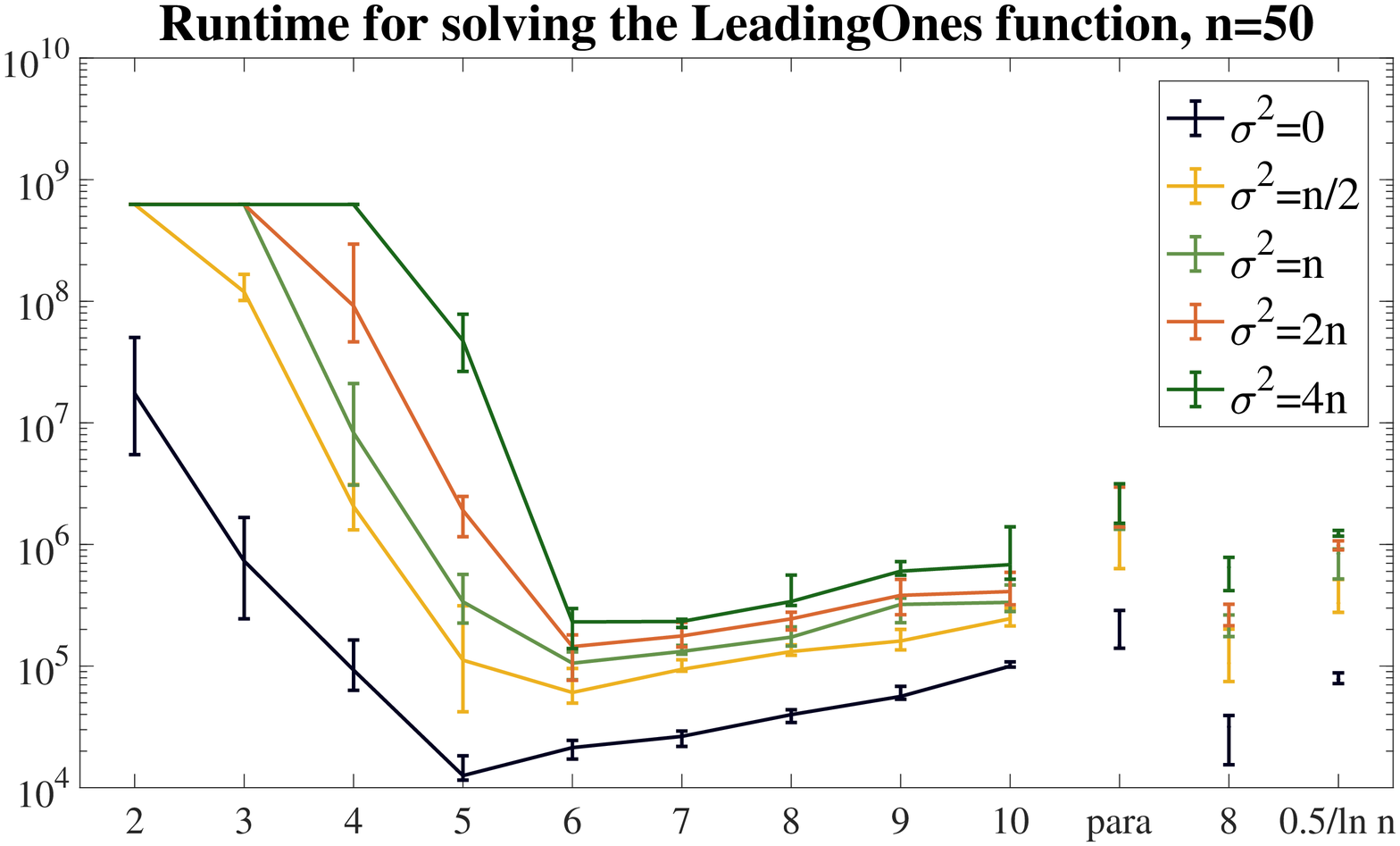}
\caption{The median number of fitness evaluations (with the first and third quartiles) of the original cGA with different $\mu$ ($\log_2 \mu \in \{2,3,\dots,10\}$), the parallel-run cGA (``para''), and the smart-restart cGA with two budget factors ($b=8$ and $b=0.5/\ln n$) on the \LO function ($n=50$) under Gaussian noise with variances $\sigma^2=0, n/2, n, 2n, 4n$ in 20 independent runs (10 runs for the original cGA). }
\label{fig:lo}
\end{figure} 

\begin{figure}[!ht]
\centering
\includegraphics[width=5.0in]{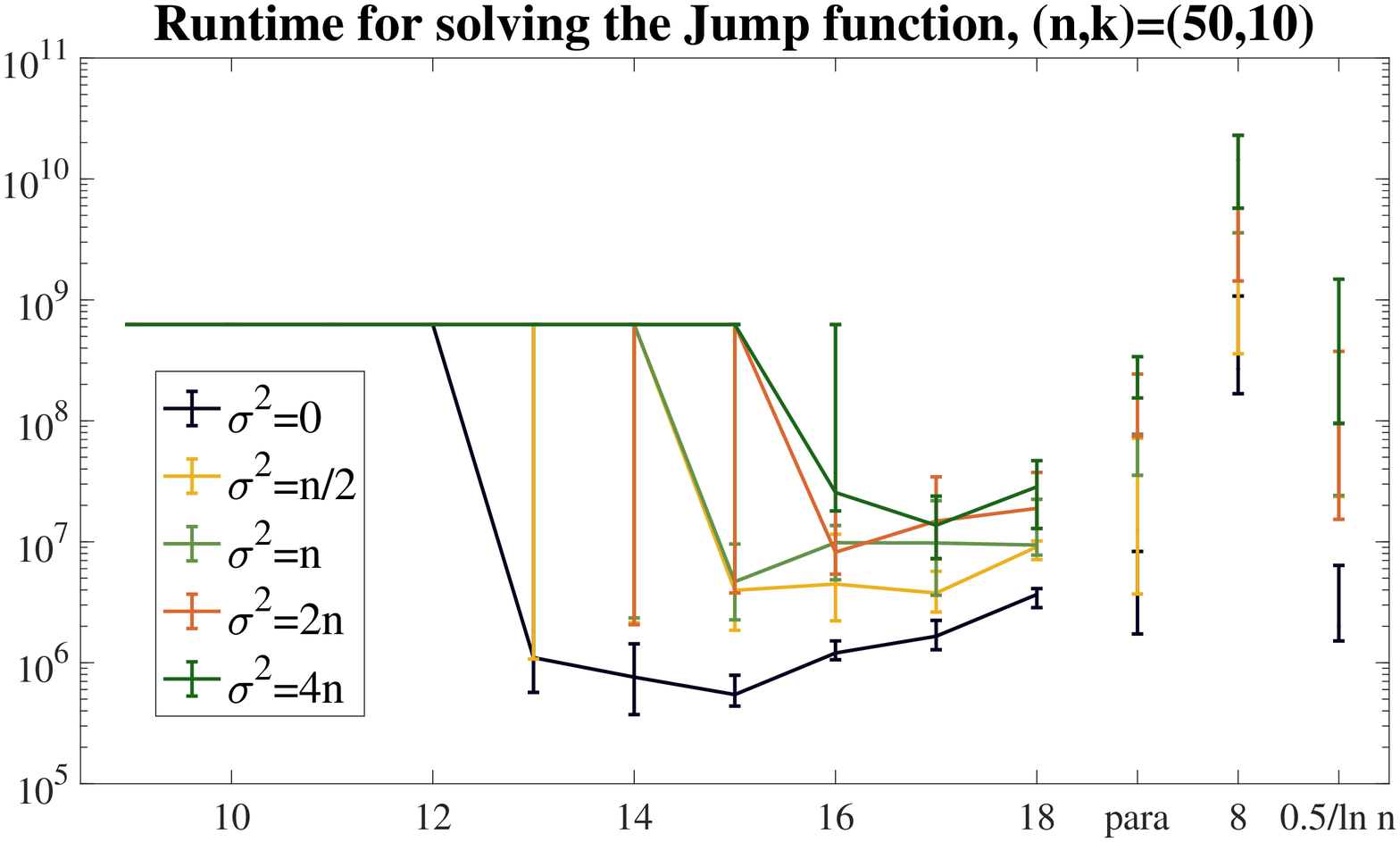}
\caption{The median number of fitness evaluations (with the first and third quartiles) of the original cGA with different $\mu$ ($\log_2 \mu \in \{9,10,\dots,18\}$), the parallel-run cGA (``para''), and the smart-restart cGA with two budget factors ($b=8$ and $b=0.5/\ln n$) on the \jump function with $(n,k)=(50,10)$ under Gaussian noise with variances $\sigma^2=0, n/2, n, 2n, 4n$ in 20 independent runs (10 runs for the original cGA). }
\label{fig:jump}
\end{figure} 

\begin{figure}[!ht]
\centering
\includegraphics[width=5.0in]{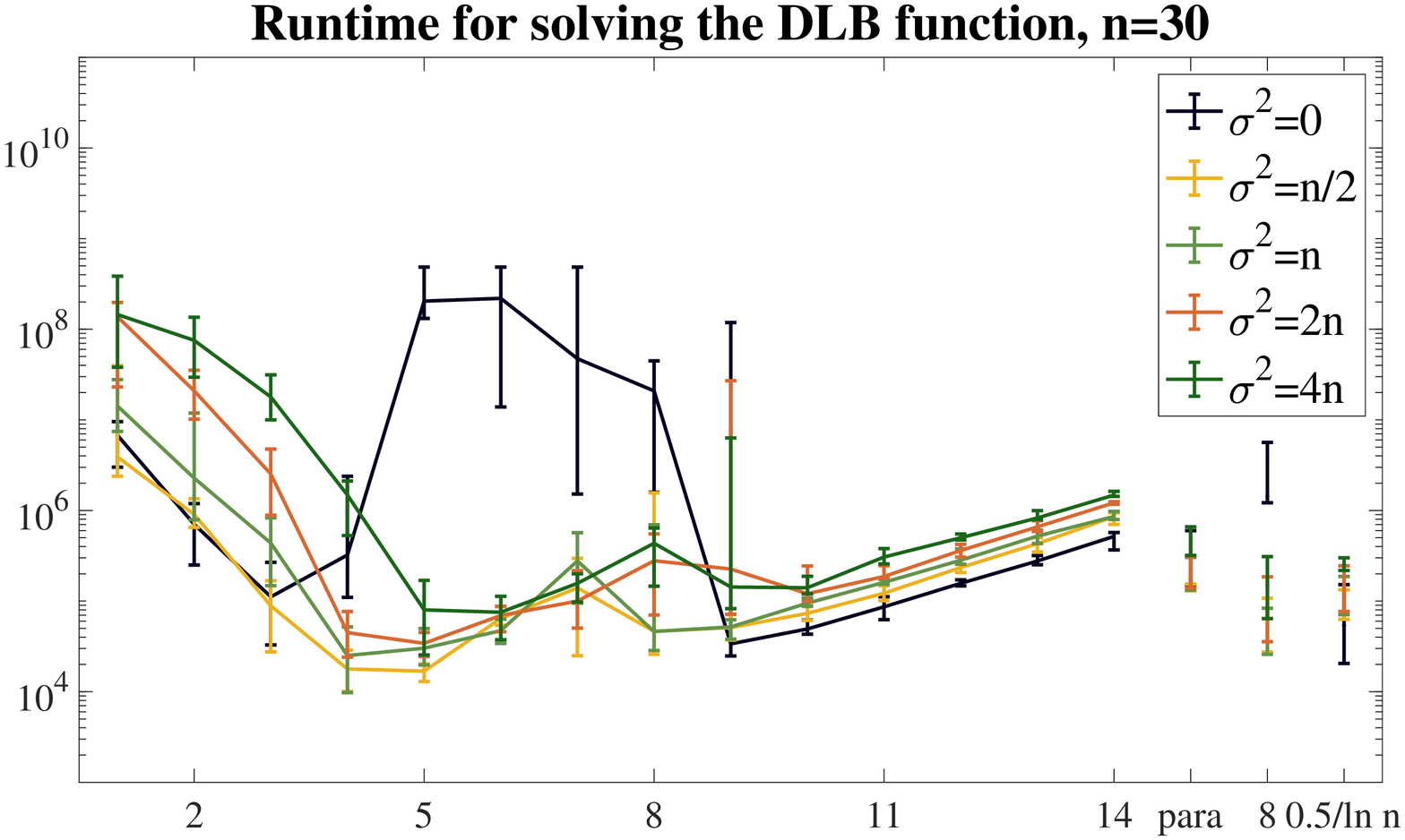}
\caption{The median number of fitness evaluations (with the first and third quartiles) of the original cGA with different $\mu$ ($\log_2 \mu \in \{1,2,\dots,14\}$), the parallel-run cGA (``para''), and the smart-restart cGA with two budget factors ($b=8$ and $b=0.5/\ln n$) on the \DLB function ($n=30$) under Gaussian noise with variances $\sigma^2=0, n/2, n, 2n, 4n$ in 20 independent runs (10 runs for the original cGA). }
\label{fig:dlb}
\end{figure}


The results displayed in Figures~\ref{fig:om}--\ref{fig:dlb} typically show that the runtime of the cGA is large both for small values of $\mu$ and for large values (exceptions are the noise-free runs on \onemax (no increase for small values for our parameters -- such an increase was observed in the $n=500$ experiments in~\cite{DoerrZ20gecco}, but only for $\mu=2$) and on \DLB (showing a bimodal runtime pattern)). For small values of $\mu$, the runtime increase is often steep and accompagnied by larger variances of the runtime. For large values of $\mu$, we typically observe a moderate, roughly linear increase of the runtime. The variances are relatively small here. 

We note that for runs that were stopped because the maximum number of generations was reached, we simply and bluntly counted this maximum number of generations as runtime. Clearly, there are better ways to handle such incomplete runs, but since a fair computation for these inefficient parameter ranges is not too important, we did not start a more elaborate evaluation. 

Let us regard the increase of the runtime for smaller population sizes in more detail. For \onemax, this increase is not very pronounced except for large values of $\sigma^2$. This fits to the known theoretical results which show that also in regimes with strong genetic drift, the cGA can optimize efficiently by, very roughly speaking, imitating the  optimization behavior of the \oea. Nevertheless, there is more profitable middle range for $\mu$ and the $\mu$ values of this range increase with increasing noise levels (which is equivalent to say with increasing runtimes). This suggests that also on easy fitness landspaces such as the one of \onemax, genetic drift can lead to performance losses.

For \leadingones, we observe a clearer optimal value for $\mu$ (depending on the noise level). Reducing $\mu$ leads to a clear (and drastic for large $\sigma$) increase of the runtime, typically at least by a factor of $10$ for each halving of $\mu$. The runtime distributions are less concentrated than for large values of $\mu$, but overall we still see a relatively regular runtime behavior. 

For \jump functions, the optimal $\mu$-value is not as clearly visible as for \leadingones, however reducing the population size $\mu$ below the efficient values leads to a catastrophic increase of the runtime, often leading to the runs stopped because the maximum number of generations is reached. We also observe a drastic increase of the variance of the runtime here. This indicates that some runs were very lucky to not suffer from genetic drift and then finished early (at a runtime as if the linear regimes was continued), whereas others suffered from genetic drift and thus took very long or never finished within the time limit. We note that when some frequencies reach the lower boundary of the frequency range due to  genetic drift, then it takes relatively long to move them back into the middle regime (simply because of the low sampling variance when the frequency is at a boundary value). During this longer runtime, of course, the remaining frequencies are still prone to genetic drift (recall that the quantitative analysis~\cite{DoerrZ20tec} shows that genetic drift is more likely the longer the run takes). These mathematical considerations and the experimental results indicate that for objective functions which could suffer from genetic drift, there are two very distinct extremal regimes: either no frequency reaches the wrong boundary and the optimization is efficient, or many frequencies reach the wrong boundary and the optimization is highly inefficient. 

The runtime behavior on \DLB is harder to understand. There is a clear ``linear regime'' from roughly $\mu=2^9$ on (again with very small variances). There is also a steep increase of the runtimes typically when lowering $\mu$ below $2^4$. In between these two regimes, the runtime behavior is hard to understand. The noisy runs show a small increase of the runtime in this middle regime together with slighly increased variances. The noise-free runs, however, are massively slower than the noisy ones, with large variances and a decent number of unsuccessful runs. We have no explanation for this. 

Apart from the runtimes on \DLB (though to some extent also here, namely in the noisy runs), our results indicate a runtime behavior as described in Assumption~(L). 
%
As a side result, this data confirms that the cGA has a good performance on noise-free \jump functions, not only in asymptotic terms as proven in~\cite{HasenohrlS18,Doerr21cgajump}, but also in terms of actual runtimes for concrete problem sizes. On a \jump function with parameters $n=50$ and $k=10$, a classic mutation-based algorithm would run into the local optimum and from there would need to generate the global optimum via one mutation. For standard bit mutation with mutation rate $\frac 1n$, this last step would take an expected time of $n^{k} (\frac {n}{n-1})^{n-k}$, which for our values of $n$ and $k$ is approximately $2.2 \cdot 10^{17}$. With the asymptotically optimal mutation rate of $\frac kn$ determined in~\cite{DoerrLMN17}, this time would still be approximately $7.3 \cdot 10^{10}$. In contrast, the median optimization time of the cGA with $\mu \in 2^{[15..18]}$ is always below $4 \cdot 10^{6}$.

Our data also indicates that a good performance of the cGA can often be obtained with much smaller population sizes (and thus more efficiently) than what previous theoretical works suggest. For example, in \cite{FriedrichKKS17} a population size of $\omega(\sigma^2 \sqrt n \log n)$ was required for the optimization of a noisy \onemax function via the cGA. In their experiments on a noisy \onemax function with $n=100$ and ${\sigma^2=n}$, a population size (called $K$ in~\cite{FriedrichKKS17} to be consistent with previous works) of $\mu = 7\sigma^2 \sqrt n (\ln n)^2\approx 148{,}000$ was used, which led to a runtime of approximately $200{,}000$ (data point for $\sigma^2 = 100$ interpolated from the two existing data points for $\sigma^2 = 64$ and $\sigma^2 = 128$ in the left chart of Figure~1 in~\cite{FriedrichKKS17}).\footnote{We have to admit that we cannot fully understand this number of $200{,}000$ and expect that it should be much larger. Our skepticism is based both on theoretical and experimental considerations. On the theoretical side, we note that even in the absence of noise and with the frequency vector having the (for this purpose) ideal value $\tau = (\frac 12, \dots, \frac 12)$, the sum $\|\tau\|_1$ of the frequency values increases by an expected value of $O(\frac 1\mu \sqrt n)$ only (with small leading constant; an absolute upper bound of $\frac 1{2\mu} \sqrt n$ follows, e.g., easily from~\cite{BerendK13}). Hence after only $200{,}000$ iterations, the frequency sum $\|\tau\|_1$ should still be relatively close to $n/2$. Since the probability to sample the optimum is $\prod_{i=1}^n (1-\tau_i) \le \exp(-\|\tau\|_1)$, it appears unlikely that the optimum is sampled within that short time. Our experimental data displayed in Figure~\ref{fig:om} suggests an affine-linear dependence of the runtime on $\mu$ when $\mu$ is at least $2^8$. From the median runtimes for $\mu=2^9$ and $\mu = 2^{10}$, which are $T_9 = 24{,}384$ and $T_{10} = 48{,}562$, we would thus estimate a runtime of $T(\mu) = T_9 + (T_{10}-T_9) (\mu - 2^9) 2^{-9}$ for $\mu \ge 2^{10}$, in particular, $T(7\sigma^2 \sqrt n (\ln n)^2) = 7{,}010{,}551$ for the data point $\sigma^2 = 100$ and $n=100$. 
To resolve this discrepancy, we conducted $20$ runs of the cGA with $\mu = \lfloor 7\sigma^2 \sqrt n (\ln n)^2 + \frac 12 \rfloor$, $\sigma^2 = 100$, $n = 100$ and observed a median runtime of 5,728,969 (and a low variance, in fact, all 20 runtimes were in the interval $[5{,}042{,}714; 6{,}131{,}522]$).} In contrast, our experiments displayed in Figure~\ref{fig:om} suggest that population sizes between 64 and 256 are already well sufficient and give runtimes clearly below $20{,}000$.

\subsection{Experimental Results and Analysis II: Runtimes of the Parallel-run cGA and the Smart-restart cGA}

The three right-most items on the $x$-axis in Figures~\ref{fig:om}--\ref{fig:dlb} show the runtimes of the parallel-run cGA and the smart-restart cGA (with two generation budget factors~$b$). We see that for the easy functions \onemax and \LO under all noise assumptions, the smart-restart cGA with both values of $b$ has a smaller runtime than the parallel-run cGA. This can be explained from the runtime data of the original cGA in the corresponding figures: Since the runtimes are similar for several population sizes, the parallel-run cGA with its strategy to assign a similar budget to different population sizes wastes computational power, which the smart-restart cGA saves by aborting some processes early. For both functions, the larger generation budget typically is superior. This fits again to the data on the original cGA and to our interpretation that genetic drift here is not so detrimental. Consequently, it is less to gain from aborting a run and starting a new one with a different population size.

More interesting are the results for \jump and \DLB. We recall that here a wrong choice of the population size can be catastrophic, so these are the two functions where not having to choose the population size is a big advantage for the user. What is clearly visible from the data is that here the smaller generation budget is preferable for the smart-restart cGA. This fits to our previously gained intuition that for these two functions, genetic drift is detrimental. Hence there is no gain from continuing a run that is suffering from genetic drift (we note that there is no way to detect genetic drift on the fly -- a frequency can be at a (wrong) boundary value due to genetic drift or at a (correct) boundary value because of a sufficiently strong fitness signal). 

What is clear as a general rule is that both algorithms, the parallel-run cGA and the smart-restart cGA with the small generation budget, clearly do a good job in successfully running the cGA with a reasonable population size -- recall that for both of the difficult functions, a wrong choice of the population size can easily imply that the cGA does not find the optimum in $10^8$ iterations.

\section{Conclusion}
\label{sec:con}
Choosing the right population size for estimation-of-distribution algorithms is one of the key difficulties for their practical usage. In order to remove the population size as a parameter and thus make the EDA easier to use, this paper proposed a parameter-less framework for EDAs, using the compact genetic algorithm as example. This framework is a simple restart strategy with exponentially growing population size, but different from previous works it sets a prior generation budget for each population size based on a recent quantitative analysis estimating when genetic drift is likely to occur and render the EDA inefficient. 

Under a reasonable assumption on how the runtime depends on the population size, we theoretically analyzed our scheme and observed that it can lead to asymptotically optimal runtimes for the cGA. 

Via extensive experiments on \onemax, \LO, \jump, and \DLB, we showed the efficiency of the parameter-less cGA, also when compared with the parallel-run cGA. The results for the original cGA with different population sizes experimentally show that the population size is crucial for the performance of the cGA and that the theoretically suggested population size can be far away from the right one.

We positively believe that our parameter-less framework for the cGA can be also applied to other univariate EDAs, again building on the quantitative analysis of genetic drift in~\cite{DoerrZ20tec}. The problem of how to cope with genetic drift, naturally, is equally interesting for multivariate EDAs. For these, however, our theoretical understanding is limited to very few results such as~\cite{ZhangM04,LehreN19foga,DoerrK20gecco}. In particular, a quantitative understanding of genetic drift comparable to~\cite{DoerrZ20tec} is completely missing. Another interesting question is if dynamic choices of the population size in EDAs can be fruitful. In classic EAs, dynamic parameter choices have recently been used very successfully to overcome the difficulty of finding a suitable static parameter value, see, e.g., the survey~\cite{DoerrD20bookchapter}. How to use such ideas for EDAs is currently not at all clear. 

\section*{Acknowlegment}
This work was supported by a public grant as part of the Investissement d'avenir project, reference ANR-11-LABX-0056-LMH, LabEx LMH.

This work was also supported by Guangdong Provincial Key Laboratory (Grant No. 2020B121201001), the Program for Guangdong Introducing Innovative and Enterpreneurial Teams (Grant No. 2017ZT07X386); Guangdong Basic and Applied Basic Research Foundation (Grant No. 2019A1515110177);  Shenzhen Peacock Plan (Grant No. KQTD2016112514355531); and the Program for University Key Laboratory of Guangdong Province (Grant No. 2017KSYS008). 

\newcommand{\etalchar}[1]{$^{#1}$}

}
\end{document}